\newtheorem{theorem}{Theorem}[section]
\newtheorem{proposition}[theorem]{Proposition}
\begin{document}

\title{Quantifying the Noise of Structural Perturbations on Graph Adversarial Attacks}

\author{Junyuan Fang}
\affiliation{%
  \institution{Department of Electrical Engineering, City University of Hong Kong}
  \city{Hong Kong SAR}
  \country{China}}
\email{junyufang2-c@my.cityu.edu.hk}

\author{Han Yang}
\affiliation{%
  \institution{School of Computer Science and Engineering, Sun Yat-sen University}
  \city{Guangzhou}
  \country{China}}
\email{yangh396@mail2.sysu.edu.cn}

\author{Haixian Wen}
\affiliation{%
  \institution{School of Computer Science and Engineering, Sun Yat-sen University}
  \city{Guangzhou}
  \country{China}}
\email{wenhx6@mail2.sysu.edu.cn}

\author{Jiajing  Wu}
\authornote{Corresponding author.}
\affiliation{%
  \institution{School of Software Engineering, Sun Yat-sen University}
  \city{Zhuhai}
  \country{China}
}
\email{wujiajing@mail.sysu.edu.cn}

\author{Zibin Zheng}
\affiliation{%
  \institution{School of Software Engineering, Sun Yat-sen University}
  \city{Zhuhai}
  \country{China}
}
\email{zhzibin@mail.sysu.edu.cn}

\author{Chi K. Tse}
\affiliation{%
  \institution{Department of Electrical Engineering, City University of Hong Kong}
  \city{Hong Kong SAR}
  \country{China}}
\email{chitse@cityu.edu.hk}

\renewcommand{\shortauthors}{Fang et al.}

\begin{abstract}
    Graph neural networks have been widely utilized to solve graph-related tasks because of their strong learning power in utilizing the local information of neighbors. However, recent studies on graph adversarial attacks have proven that current graph neural networks are not robust against malicious attacks. Yet much of the existing work has focused on the optimization objective based on attack performance to obtain (near) optimal perturbations, but paid less attention to the strength quantification of each perturbation such as the injection of a particular node/link, which makes the choice of perturbations a black-box model that lacks interpretability. In this work, we propose the concept of noise to quantify the attack strength of each adversarial link. Furthermore, we propose three attack strategies based on the defined noise and classification margins in terms of single and multiple steps optimization. Extensive experiments conducted on benchmark datasets against three representative graph neural networks demonstrate the effectiveness of the proposed attack strategies. Particularly, we also investigate the preferred patterns of effective adversarial perturbations by analyzing the corresponding properties of the selected perturbation nodes.
\end{abstract}

\begin{CCSXML}
<ccs2012>
   <concept>
       <concept_id>10010147.10010257.10010293.10010294</concept_id>
       <concept_desc>Computing methodologies~Neural networks</concept_desc>
       <concept_significance>500</concept_significance>
       </concept>
    <concept>
       <concept_id>10002978.10003014</concept_id>
       <concept_desc>Security and privacy~Network security</concept_desc>
       <concept_significance>500</concept_significance>
       </concept>
 </ccs2012>
\end{CCSXML}

\ccsdesc[500]{Computing methodologies~Neural networks}
\ccsdesc[500]{Security and privacy~Network security}

\keywords{Graph Neural Networks, Graph Adversarial Attacks, Robustness, Noise Propagation}

\received{\today}

\maketitle

\section{Introduction}
Graphs, where nodes indicate the entities and links represent the corresponding relationships between entities, have been widely employed to characterize the interaction of various social systems. In recent years, due to the strong learning power of the neighborhood aggregation mechanism, graph neural networks (GNNs) have obtained great success on various graph-related downstream tasks, such as node classification, link prediction, and community detection \cite{wu2020comprehensive, zhou2020graph,fang2024gani,wu2020graph,jiang2021graph}.

	Although GNNs perform well in these graph-based tasks, recent studies \cite{sun2018adversarial, chen2020survey} pointed out that current GNNs are not robust to carefully designed attacks (i.e., adversarial attacks). In other words, GNNs may be easily deceived after being injected with some small perturbations into the data, such as modifying the node features, adding or removing the links, injecting fake nodes, etc. Therefore, a series of works focusing on graph adversarial attacks has been proposed to further evaluate the robustness of current GNNs from different perspectives \cite{zugner2018adversarial, wu2019adversarial, dai2018adversarial, chen2018fast,sun2019node, tao2021single}.

	Most existing adversarial attack methods focus on the optimization objective based on final attack performance by utilizing gradient information \cite{chen2018fast, li2021adversarial}, classification margins \cite{dai2018adversarial}, reinforcement learning (RL) algorithm \cite{dai2018adversarial, sun2019node}, etc., to obtain (near) optimal perturbations. Yet these methods pay less attention to the strength quantification of each perturbation, such as the injection of a particular node/link into the clean graph, which makes the choice of perturbations a black-box model that lacks interpretability. Therefore, we want to identify the contributions of adversarial links and further uncover the selection preference of the optimal perturbation of graph adversarial attacks.

	In this work, we first propose the concept of \emph{noise} to quantify the attack strength of each adversarial link, and then propose three simple yet effective attack methods based on this noise concept to further evaluate the robustness of current GNNs. Specifically, following the classic work \cite{dai2018adversarial} in the graph adversarial attacks, we focus on the scenario of targeted attacks on the node classification task, which means the attack goal is to make GNNs predict the targeted node to the wrong classes. Based on neighborhood aggregations of GNNs, we theoretically propose and discuss the noise  of different adversarial links. In addition, we formulate a reasonable noise function to quantify the noise of each adversarial link on the structural attacks. We then refine the adversarial links based on their rankings of noise value. According to the defined noise function and the traditional classification margins, we then propose three different attack strategies from the perspectives of both single step and multiple steps optimization, namely \underline{n}oise-based \underline{g}reedy \underline{a}ttacks (\textbf{NGA}), \underline{n}oise and \underline{m}argin-based \underline{a}ttacks (\textbf{NMA}), and \underline{n}oise and \underline{m}argin-based \underline{a}ttacks-\underline{b}oost (\textbf{NMAB}). Finally, we verify the attack performance of the proposed methods on the benchmark datasets against three representative GNNs. Moreover, we also measure the properties of the adversarial links from several perspectives, including their class, degree, homophily ratio \cite{pei2020geom, zhu2020beyond, fang2025contributes}, etc. The major contributions of this work are summarized as follows.

	\begin{enumerate}
		\item \textbf{\em New Concept}. We theoretically analyze the perturbation of adversarial links from the perspective of noise propagation based on the neighborhood aggregation mechanism of GNNs, and then present the concept of \emph{noise} to quantify the perturbation strength of each adversarial link on neighborhood aggregations.
		\item \textbf{\em New Methods}. Based on the defined noise function and classification margins of adversarial links, we then propose three simple yet effective attack strategies by considering both the single step and multiple steps optimizations.
		\item \textbf{\em Comprehensive Experiments}. We conducted extensive experiments on three benchmark datasets against representative GNNs to verify the superiority of the proposed methods. Moreover, we further analyze the properties and patterns of the chosen adversarial links from several perspectives. 
	\end{enumerate}

	\section{Related Work}\label{sec:rw}

	\subsection{Graph Neural Network Models}
	Graph neural networks (GNNs) have obtained great success on different graph-related tasks by utilizing the message passing mechanism. Through the neighborhood aggregations of GNNs, each node in the graph can utilize the information of its neighbors to further characterize itself. A series of well-known GNNs have been proposed in the past few years. For example, the graph convolution network (GCN) \cite{kipf2016semi} achieved the feature propagation via a first-order approximation of localized spectral filters on graphs. GraphSAGE \cite{hamilton2017inductive} utilized neighborhood sampling mechanism and proposed more advanced aggregators to improve the scalability of GNNs. Graph attention network (GAT) \cite{velivckovic2017graph} assigned different weights to different neighbors adaptively via a self-attention mechanism. Simplified graph convolution network (SGC) \cite{wu2019simplifying} further reduced the complexity of the original GCN by removing the non-linear activation functions in the middle aggregation layers.

	\subsection{Categories of Graph Adversarial Attacks}
	Despite the strong potential shown on various downstream tasks, recent GNNs tend to be vulnerable to the particularly designed perturbations, which are called graph adversarial attacks.
	Based on the claim from previous studies \cite{sun2018adversarial, chen2020survey}, graph adversarial attacks can be divided into graph modification attacks and node injection attacks from the perspective of attack operations. The former assumes that the attackers can directly modify the original graphs by changing features or flipping links, while the latter considers that the attackers prefer to inject some fake nodes into the original graph. Moreover, from the perspective of attack goals, graph adversarial attacks can be divided into targeted attacks and global attacks, indicating that the attackers will focus on attacking a single node and the overall performance of all nodes each time, respectively. In addition, based on different attack stages, graph adversarial attacks can also be divided into evasion attacks (i.e., test time) which evaluate the attack performance on the pre-trained GNNs, and poisoning attacks (i.e., training time) which evaluate the attack performance on the retrained GNNs. 
	
	As the main objective of this work is to investigate the specific difference of the adversarial links in influencing the neighborhood aggregation mechanism of GNNs, we will focus on the \textbf{{\em graph modification attacks}} and \textbf{{\em targeted evasion attacks}} on the node classification task. In other words, our goal is to make GNNs to predict the targeted node to the wrong class via structural perturbations in the testing phase.

	\subsection{Existing Methods of Graph Adversarial Attacks}
	Z$\ddot{\rm u}$gner {\em et al.} \cite{dai2018adversarial} first pointed out that traditional GNNs can be easily deceived through small unnoticeable perturbations by proposing NETTACK method by selecting the adversarial links causing the largest classification margins. After that, a bunch of works have been proposed to investigate the robustness of GNNs. For instance, Dai {\em et al.} \cite{dai2018adversarial} proposed GradArgmax via flipping the corresponding link with the largest magnitude of gradients. Chen {\em et al.} \cite{chen2018fast} presented another gradient-based attack method, FGA, by adding/removing the corresponding valid link with the largest absolute gradient value. Zhang {\em et al.} \cite{zhang2020cross} utilized the cross entropy to denote the similarity of different nodes, then heuristically connect/disconnect the corresponding node pairs. Li {\em et al.} \cite{li2021adversarial} proposed SGA to achieve the gradient-based attacks by leveraging subgraph to reduce the time and space of gradient calculation. Geisler {\em et al.} \cite{geisler2021robustness} developed an attack strategy by adopting the sparsity-aware first-order optimization attacks and a novel surrogate loss to improve the scalability of attacks. Zhu {\em et al.} \cite{zhu2024simple} introduced the partial graph attack strategy by adopting a hierarchical selection policy to select the vulnerable nodes as the attack targets and a cost-effective anchor-picking policy to pick the most promising anchor nodes for modifying edges. Recently, Alom {\em et al.} \cite{alomgottack} proposed a subtle and effective attack method, GOttack, by manipulating the graph orbit vector of each node. Besides, there are also a series of strong attack methods from other optimization perspectives, such as the reinforcement learning algorithms \cite{sun2019node}, meta-gradients \cite{zugner2019adversarial}, evolutionary algorithms \cite{chen2019ga}, etc. However, few of them have analyzed the specific difference between different adversarial links, especially for targeted attacks. Therefore, we want to bridge this gap and quantitatively evaluate the difference of structural perturbations, then propose powerful attack strategies from the perspective of noise propagation on neighborhood aggregations.

\begin{table}[t]
    \centering
    \caption{Main notations in this paper.}
  \begin{tabular}{ll}
    \toprule
    Notation          & Definition             \\ \midrule
    $G$               & Graph                  \\
    $V$               & Node set               \\
    $E$               & Link set               \\
    $X$, $x_{u}$      & Node feature matrix, feature vector of node $u$  \\
    $Y$    & Label set   \\
    $\mathcal{N}(u)$ & Neighbor set of node $u$ including itself         \\
    $f_{\theta}$  & GNN model \\
    $h_u^{(k)}$ & Representation of node $u$ in the $k$-th layer \\
    $\Delta$ & Attack budget, i.e., number of links to be added \\
    $\mathcal{L}_{\rm train}$ & Training loss of GNN models \\

    $\mathcal{L}_{\rm atk}$ & Attack loss of the adversaries \\
    $\widetilde{LN} (u, v)$ & Link noise for link $(u, v)$\\
    ${\rm DIS} (u, v)$ & Dissimilarity between nodes $u$ and $v$\\
    $\widetilde{\rm LN} (u, v)$ & Appropriate link noise for link $(u, v)$\\
    \bottomrule
    \end{tabular}
    \label{tab:notation}
\end{table}

	\section{Preliminaries}\label{sec:p}
	\subsection{Graph Neural Networks}
	A graph can be denoted as $G = (V, E, X)$ where $V = \{v_1,v_2,\cdots,v_n\}$ represents the node set with $n$ nodes and $E=\{e_1,e_2,\cdots,e_m\}$ indicates the link set with $m$ links. $X \in \mathbb{R}^{n \times d}$ represents the feature matrix where each node has a $d$-dimension feature vector. We summarize the main notations used in this paper in Table~\ref{tab:notation}.
    
    The details of a general semi-supervised node classification task \cite{kipf2016semi} are as follows. Assuming the $n$ nodes belong to the label set $Y = \{0,1,\cdots,C-1\}$ with $C$ different classes, we will first train a GNN model $f_{\theta}$ based on the labeled/training nodes $V_{\rm train} \subset V $ by minimizing a training loss $\mathcal{L}_{\rm train}$, and then utilize $f_{\theta^*}$ to predict the possible class $y_i$ of remaining/test nodes $\overline{V} = \{v_i | v_i \in V \setminus V_{\rm train}\}$. Specifically, a $k$-layer GCN can be formally defined as follows.
	
	\begin{equation}\label{eq:gcn}
		h_u^{(k)} = \sigma(W^{(k)} \cdot Agg(h_v^{(k-1)} | v \in \mathcal{N}(u)),
	\end{equation}
	where $h_u^{(k)}$ indicates the representation of node $u$ in the $k$-th layer, $\mathcal{N}(u)$ is the neighbors of node $u$ including self-loop (i.e., $u \in \mathcal{N}(u)$). $\sigma$ is the activation function such as ReLU in the middle layers or softmax in the last layer, and $W^{(k)}$ is the learning parameter in the $k$-th layer. Moreover, $Agg(\cdot)$ is the aggregation function that combines the representation of neighboring nodes in the prior layer, which is as follows.

	\begin{equation}\label{eq:agg}
		Agg(h_v^{(k-1)} | v \in \mathcal{N}(u)) = \sum_{v \in \mathcal{N}(u)} \frac{1}{\sqrt{|\mathcal{N}_u| \cdot |\mathcal{N}_v|}} \cdot h_v^{(k-1)},
	\end{equation}
	where $|\mathcal{N}_u|$ and $|\mathcal{N}_v|$ represent the number of neighbors of nodes $u$ and $v$, respectively. Therefore, term $\frac{1}{\sqrt{|\mathcal{N}_u| \cdot |\mathcal{N}_v|}}$ can be considered as the weight or contribution of each neighbor $v$ to the future representations of the central node $u$.
	
	The learning objective of the node classification task is to minimize the training loss (e.g., entropy loss), which is as follows.
	\begin{equation}\label{eq:loss_train}
		\min \limits_{\theta} \mathcal{L}_{\rm train} = - \sum_{v \in V_{\rm train}} \ln Z_{v,c}, \quad Z = f_\theta(G),
	\end{equation}
	where $Z \in \mathbb{R}^{n \times C}$ is the output of the last layer of GNN model $f_{\theta}$, $Z_{v,c}$ indicates the probability that node $v$ belongs to the true class $c$.

	\subsection{Graph Adversarial Attacks}
	As we mentioned before, the goal of node classification is to obtain the optimal model $f_{\theta^*}$ by minimizing $\mathcal{L}_{\rm train}$. Graph adversarial attacks, on the contrary, aim to fool a GNN model $f_{\theta}$ by generating a new adversarial graph $G^{\prime}$ via injecting some small perturbations into the original graph $G$ under a given budget $\Delta$, where the budget $\Delta$ represents the number of links can be flipped in structural attacks. More specifically, the goal of graph adversarial attacks can be formulated as follows.
	\begin{equation}\label{eq:loss_attack}
		\begin{aligned}
			& 	\min \mathcal{L}_{\rm atk}(f_{\theta^{\prime}}(G^{\prime}))
			\\
			& s.t. |G^{\prime} - G| \leq \Delta,
		\end{aligned}
	\end{equation}
	where $f_{\theta^{\prime}}$ represents the GNN model being attacked, which will be either trained on the clean graph $G$ in the evasion attacks (i.e., this work) or trained on the perturbed graph $G^{\prime}$ in the poisoning attacks. $\mathcal{L}_{\rm atk}$ represents the attack loss of the adversaries. As we focus on the targeted evasion attacks, we define the loss of the target node $u$ as the classification margin (${\rm CM}$) between the predicted probabilities after and before the attacks, which can be given as follows.
	\begin{equation}\label{eq:loss_attack1}
		\mathcal{L}_{\rm atk}(u) = f_{\theta^{\prime}}(G^{\prime})_{u,c} - f_{\theta^{\prime}}(G)_{u,c}.
	\end{equation}
	We try to minimize the ${\rm CM}$ between the probabilities that node $u$ be predicted to the ground truth label $c$ after and before the adversarial attacks under the same GNN model $f_{\theta^{\prime}}$. Obviously that ${\rm CM} \in [-1, 1]$, and a smaller ${\rm CM}$ indicates a better attack performance.

	\section{Proposed Methods}\label{sec:pm}
	In this section, we first introduce the definition of the noise of adversarial links, and then give the details of the proposed three attack methods based on noise propagation.
	
	\subsection{Noise Propagation}\label{sec:np}

	In the graph adversarial attacks, the key point to mislead the target node to a wrong class is to influence the neighborhood aggregations of GNNs. For example, the attackers can add some fake links between the target node and the nodes that may be harmful to the future representation of the target node. In addition to adding links, the attackers also can remove some original links between the target node and their neighbors. As previous studies \cite{bojchevski2019certifiable, jin2020adversarial} have shown that adding link attacks would be more powerful than removing link attacks, we only consider adding link attacks in this work for simplicity.
	
	Typically, we assume that our neighbors will help obtain a better representation via neighborhood aggregations. However, {\em what will happen if the information from our neighbors is harmful to the target node?} Recall to (\ref{eq:agg}), we can simply consider that the term $\frac{1}{\sqrt{|\mathcal{N}_u| \cdot |\mathcal{N}_v|}}$ ($Weight$ for short) as the aggregated weight of neighbor node $v$ to the central node $u$. Particularly, in the following, we define the potential node $v$ that chose to connect with the target node $u$ as the \textbf{adversarial node}, and the corresponding link $(u, v)$ as the \textbf{adversarial link}. As we only consider adding link attacks, the corresponding adversarial link $(u, v)$ should not exist in the clean graph. Moreover, we focus on the noise propagation on homophilic networks where most of the original neighbors of each node belong to the same class because traditional GNNs, such as GCN, usually cannot perform well on heterophilic networks, and corresponding heterophilic GNNs usually employ some new aggregation designs. Thus, the findings in this work maybe cannot be directly extended to the heterophilic networks, and we leave it for future work.	
	
	In the targeted attacks, for a specific target node $u$, if the aggregation between the target node $u$ and each possible adversarial node $v$ only contains noise and the specific noise value of each adversarial link is the same, we can have the following proposition based on (\ref{eq:agg}).

	\begin{proposition} \label{pro1}
		Let $G = (A, X, E)$ be a simple graph, and $Y = \{0,1,\cdots,C-1\}$ be the possible label. We simplify the feature of each node to be a one-hot vector corresponding to its label, denoted as $\mu(Y)$. Namely, the feature vector of node $u$ is $x_{u} = \mu(Y_u)$. Assuming that most of the original neighbors of each node belong to the same class, and the specific noise value of each adversarial link is the same. Consider a one-layer GCN where the output of node $u$ is $ h_u = \sigma(W \cdot \sum_{v \in \mathcal{N}(u)} \frac{1}{\sqrt{|\mathcal{N}_u| \cdot |\mathcal{N}_v|}} \cdot x_v)$, $\sigma$ is the softmax activation function, we have the following. 
		\begin{enumerate}
			\item From the perspective of target nodes, nodes with a lower degree will be easier to be attacked than those with a higher degree.
			\item From the perspective of adversarial nodes, nodes with a lower degree will influence the representation of the target node more than those with a higher degree.
		\end{enumerate} 
	\end{proposition}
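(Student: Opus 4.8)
The plan is to treat the single aggregation step of the one-layer GCN as producing a vector that splits into a \emph{signal} part, contributed by the original neighbors of the target node, and a \emph{noise} part, contributed by the added adversarial link; then show that the \emph{relative} magnitude of the noise part is controlled by the normalization coefficients $\tfrac{1}{\sqrt{|\mathcal N_u|\cdot|\mathcal N_v|}}$, and read both assertions off the monotonicity of these coefficients in $|\mathcal N_u|$ and $|\mathcal N_v|$ respectively.

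First I would fix a target node $u$ with true class $c$ and write its pre-attack aggregated feature as $\mathrm{agg}(u)=\sum_{w\in\mathcal N(u)}\tfrac{1}{\sqrt{|\mathcal N_u||\mathcal N_w|}}\,\mu(Y_w)$. Using the hypotheses that each $x_w=\mu(Y_w)$ is one-hot and that almost all $w\in\mathcal N(u)$ satisfy $Y_w=c$, this vector is, up to a small cross-class remainder, a positive multiple of $\mu(c)$ whose norm grows with $|\mathcal N_u|$ — comparable to $|\mathcal N_u|/\sqrt{|\mathcal N_u|\,\bar d}$ when the neighbor degrees are all close to some typical value $\bar d$. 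Adding the adversarial link $(u,v)$ does two things: (i) it rescales the weight of every surviving neighbor by $\sqrt{|\mathcal N_u|/(|\mathcal N_u|+1)}$, a dilution effect; and (ii) it appends the \emph{link-noise} term $\tfrac{1}{\sqrt{(|\mathcal N_u|+1)\,|\mathcal N_v|}}\,\epsilon$, where $\epsilon$ is the common per-link noise vector postulated in the statement ("the aggregation $\dots$ only contains noise and $\dots$ the same"). Because the one-layer output $h_u=\sigma(W\cdot\mathrm{agg}'(u))$ depends on the adversarial node $v$ only through $x_v$ (here $\epsilon$) and $|\mathcal N_v|$ — the representation of $v$ itself never feeds back into $h_u$ — this accounting is complete.

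Next I would form the noise-to-signal ratio $\rho(u,v)$, the norm of the appended term divided by the norm of the surviving signal term, obtaining $\rho(u,v)\propto \tfrac{1}{|\mathcal N_u|}\sqrt{\bar d/|\mathcal N_v|}$ (the exact constant is immaterial). Assertion (1) is then immediate: for fixed $|\mathcal N_v|$, $\rho(u,v)$ is strictly decreasing in $|\mathcal N_u|$, so a lower-degree target node carries a proportionally larger noise component; the dilution factor from (i) only reinforces this, since replacing $|\mathcal N_u|$ by $|\mathcal N_u|+1$ is a larger relative change for small $|\mathcal N_u|$. Assertion (2) follows by fixing $u$ and observing that the injected noise magnitude $\tfrac{\|\epsilon\|}{\sqrt{(|\mathcal N_u|+1)\,|\mathcal N_v|}}$ is strictly decreasing in $|\mathcal N_v|$, so a lower-degree adversarial node perturbs $\mathrm{agg}(u)$, and hence $h_u$, more strongly.

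The last link in the chain — and the step I expect to be the main obstacle — is passing from "larger perturbation of the aggregated vector in a harmful direction" to "larger drop in $Z_{u,c}$", i.e. to a genuinely stronger attack, since this passage runs through the learned map $W$ followed by softmax. I would handle it by noting that $Z_{u,\cdot}=\mathrm{softmax}(W\cdot\mathrm{agg}'(u))$ and that, along the fixed direction $W\epsilon$, the true-class coordinate $Z_{u,c}$ is a smooth monotone function of the added amount $t\,\epsilon$ over the relevant range (the premise that the link "only contains noise" being formalized as: $W\epsilon$ does not increase the class-$c$ logit relative to the strongest competing class). Monotonicity then transfers the ordering of the scalar coefficients established above to the ordering of attack effectiveness, which is exactly what Proposition~\ref{pro1} claims. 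I would flag the two mild analytic inputs — comparability of neighbor degrees ($\bar d$) and monotonicity of the softmax margin along the noise direction — as explicit assumptions rather than attempt to establish them in full generality, since the proposition is intended as a scaling statement about the aggregation weights.
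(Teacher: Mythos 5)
Your proposal is correct and follows essentially the same route as the paper's own proof: both decompose the post-attack aggregation into a signal term from the original (same-class, average-degree) neighbors and a noise term from the adversarial link, read off both claims from the monotonicity of the normalization coefficients in $|\mathcal{N}_u|$ and $|\mathcal{N}_v|$, and then pass through $W$ and the monotone softmax under an explicit auxiliary assumption (the paper posits a diagonal-like optimal $W^*$ acting on one-hot features where you instead assume the margin is monotone along the direction $W\epsilon$; these play the same role). The only nit is that the adversarial node's degree should also be incremented to $|\mathcal{N}_v|+1$ after the link is added, as in the paper's Eq.~(12), though this does not affect any monotonicity conclusion.
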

    
	\begin{proof}
        See Appendix~\ref{proof:4.1} for the detailed proof.
        \end{proof}
	  we can further explain Proposition \ref{pro1} as follows. Intuitively, a target node with a higher degree will have a larger value of $|\mathcal{N}_u|$, thus leading to the fact that each neighbor will only contribute a small part in the neighborhood aggregation procedure as each of them has a small value of $Weight$. Therefore, if we only inject a small budget of adversarial links, it is unlikely to attack the node (i.e., influence the node representations) having a high degree successfully.
	
	From another perspective, for attacking a specific target node $u$, $|\mathcal{N}_u|$ is the same for all adversarial links, and thus a lower $|\mathcal{N}_v|$ indicates a much larger $Weight$. Therefore, these kinds of neighbors will contribute more to the target node in the aggregation procedure, indicating that nodes with a lower degree will be more aggressive than the nodes with a higher degree under this assumption (i.e., each adversarial link has the same noise value).

	However, the above observations are obtained based on the assumption that each adversarial link has the same noise value, but obviously the information from neighbors is more complicated, each link will have a different noise value on the propagation. Therefore, we have the further proposition as follows.

	\begin{proposition} \label{pro2}
		Except for the specific noise value of adversarial links varying from each other, we let all of the other assumptions be the same as Proposition \ref{pro1}. Then, we have the following. For a specific target node $u$, if the adversarial nodes have the same degree, the adversarial nodes which are dissimilar to node $u$ influence the aggregation of node $u$ more than those similar to node $u$.
	\end{proposition}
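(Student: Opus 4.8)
The plan is to track how a single added adversarial link changes the pre-softmax aggregated feature vector of the target node $u$, and then argue monotonically about the resulting change in the predicted probability $Z_{u,c}$ of $u$'s true class $c$. Writing the aggregation of $u$ in the perturbed graph as
\[
m_u \;=\; \underbrace{\textstyle\sum_{v\in\mathcal N(u)\setminus\{v^\star\}} \frac{1}{\sqrt{|\mathcal N_u|\,|\mathcal N_v|}}\,x_v}_{=:\,m_u^{0}} \;+\; \frac{1}{\sqrt{|\mathcal N_u|\,|\mathcal N_{v^\star}|}}\,x_{v^\star},
\]
with $v^\star$ the adversarial node, the homophily assumption together with the one-hot convention $x_v=\mu(Y_v)$ gives that $m_u^{0}$ is, up to lower-order terms, a positive multiple $s\,\mu(c)$ of the one-hot vector of class $c$. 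I would first make this step precise; if desired one can treat the ``most of the neighbors'' hypothesis as an idealized ``all'' and then recover the general statement from continuity of the softmax in its logits.

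Next I would exploit the hypothesis that every candidate adversarial node has the same degree $d$: then the aggregation weight $w:=1/\sqrt{|\mathcal N_u|\,d}$ carried by the adversarial term is identical across candidates, and the common rescaling of the existing-neighbor weights caused by $|\mathcal N_u|\mapsto|\mathcal N_u|+1$ is also identical across candidates and so drops out of the comparison. Hence the only quantity that varies is the direction $x_{v^\star}=\mu(\ell)$, i.e.\ the class $\ell=Y_{v^\star}$, and it suffices to compare two cases. If $v^\star$ is \emph{similar} to $u$ ($\ell=c$), then $m_u=(s+w)\,\mu(c)$ remains collinear with $\mu(c)$, so passing it through $W$ and the softmax merely rescales the logit vector along a fixed direction; under a mild well-trained-$W$ assumption (consistent with the setting of Proposition~\ref{pro1}) this does not decrease $Z_{u,c}$ — it in fact increases it — so the classification margin of Eq.~(\ref{eq:loss_attack1}) is not reduced and there is no attack effect. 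If $v^\star$ is \emph{dissimilar} ($\ell\neq c$), then $m_u=s\,\mu(c)+w\,\mu(\ell)$ acquires a genuine component in a new coordinate, so the logit vector changes by $w\,W\mu(\ell)$, which raises the logit of class $\ell$ relative to that of class $c$ (the aligned entry dominating the cross entry under the same convention) and therefore strictly decreases $Z_{u,c}$ by an amount growing with $w$. Comparing the two cases yields that the dissimilar adversarial node perturbs the aggregation of $u$ — and hence the prediction at $u$ — strictly more than the similar one, which is the claim.

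A last remark is that dissimilarity here is effectively a $0/1$ quantity (${\rm DIS}(u,v)=0$ precisely when $Y_u=Y_v$ under one-hot features), so the two-case comparison already proves the proposition; the same logit-gap computation additionally shows the effect is monotone in how much mass $x_{v^\star}$ places off coordinate $c$, should a graded version be wanted. I expect the main obstacle to be the softmax step: I need a clean monotonicity statement that $Z_{u,c}$ is increasing in the gap $(Wm_u)_c-\max_{j\neq c}(Wm_u)_j$ and that adding $w\,W\mu(\ell)$ can only shrink this gap, proved without over-specifying $W$ — so I would state explicitly the structural assumption on $W$ inherited from Proposition~\ref{pro1} and check carefully that relaxing homophily from ``all'' to ``most'' does not flip the sign of the gap. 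The degree-equality hypothesis is exactly what removes the confounding effect of unequal $w$'s present in Proposition~\ref{pro1}, so it requires no further work.
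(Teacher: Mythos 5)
Your proposal follows essentially the same route as the paper's own proof: decompose the aggregated representation of $u$ into the homophilous original-neighbor term aligned with $\mu(Y_u)$ plus the adversarial term (whose weight is common to all candidates because their degrees are equal), observe that a similar adversarial node's one-hot contribution reinforces the $Y_u$-coordinate while a dissimilar node's contribution lands on another class's coordinate, and conclude via the diagonal-like $W^*$ and the monotone softmax that the $Y_u$-th output entry is larger in the similar case. Your treatment is in fact somewhat more careful than the paper's (which only says informally that ``a large part of the second term can be combined with the first term''), particularly in flagging the softmax-monotonicity step and the effectively binary nature of similarity under one-hot features, but the underlying argument is the same.
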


        \begin{proof}
        See Appendix~\ref{proof:4.2} for the detailed proof.
	\end{proof}
	Based on proposition \ref{pro2}, to better quantify the specific noise value of each adversarial link on the neighborhood aggregation process of target node $u$, we then propose a metric {\em link noise} (${\rm LN}$) considering both degree and similarity, which is defined as follows.
	\begin{equation}\label{eq:ln}
		{\rm LN}(u,v) = \frac{1}{\sqrt{(|\mathcal{N}_u|+1) \cdot (|\mathcal{N}_v|+1)}} \cdot {\rm DIS}(u,v), \; (u,v) \notin E,
	\end{equation}
	where the first term represents the latest $Weight$ of the adversarial node $v$ or the adversarial link ($u$, $v$). As we only focus on adding link attacks, the degree of both $u$ and $v$ will increase by one. The second term ${\rm DIS}(u,v)$ indicates the dissimilarity between the target node $u$ and corresponding adversarial node $v$, or in other words, the total noise between nodes $u$ and $v$ on aggregations. Based on (\ref{eq:agg}), we know only $\frac{1}{\sqrt{(|\mathcal{N}_u|+1) \cdot (|\mathcal{N}_v|+1)}}$ of the total noise will be aggregated and further influence the future representation of node $u$. Referring to previous work \cite{zhang2020cross,li2021deep}, we utilize the entropy of corresponding representations of nodes obtained from GNNs to characterize the noise or dissimilarity between the target node and the adversarial nodes, which is as follows.
	\begin{equation}\label{eq:dis}
		{\rm DIS}(u,v) = - \sum_{i = 0}^{C-1} (h_u^{(k)})_i \log (h_v^{(k)})_i, \quad (u,v) \notin E,
	\end{equation}
	where $(h_u^{(k)})_i$ is the $i$-th hidden representation of GNN model of node $u$ in the $k$-th layer. Actually, $(h_u^{(k)})_i$ is the probability that node $u$ belongs to class $i \in Y$ as if the total layer of the corresponding GNN is $k$. Particularly, since the attackers cannot exactly know the specific GNN model that is being attacked, we utilize the representation of classic GCN \cite{kipf2016semi} as the surrogate model in this work. From (\ref{eq:dis}), we can find that a higher similarity between the target node $u$ and adversarial node $v$ indicates a lower dissimilarity ${\rm DIS}(u,v)$.

	Combining (\ref{eq:ln}) and (\ref{eq:dis}), for a specific target node $u$, we can simplify the calculation by removing the same term as the target node $u$ is the same for all adversarial links. Therefore, we further define {\em appropriate link noise} ($\widetilde{\rm LN}$) as follows.
	\begin{equation}\label{eq:aln}
		\widetilde {\rm LN}(u,v) = \frac{- \sum_{i = 0}^{C-1} (h_u^{(k)})_i \log (h_v^{(k)})_i}{\sqrt{|\mathcal{N}_v|+1}}.
	\end{equation}
	
	Based on the above analysis, we can intuitively consider that a specific adversarial link $(u, v)$ with a higher $\widetilde {\rm LN}$ will influence the neighborhood aggregations of target node $u$ more than those with smaller $\widetilde {\rm LN}$. Therefore, it can be regarded as an importance measure of adversarial links. In the following, we further propose three attack strategies considering the proposed {\em appropriate link noise} metric.

	\subsection{Noise-Based Greedy Attacks}
	As analyzed above, $\widetilde{\rm LN}(u,v)$ indicates the noise value of potential adversarial links connecting $u$ and $v$. Therefore, we first propose a simple noise-based greedy attack method (NGA) by directly utilizing the noise value of each valid adversarial link. Specifically, we greedily add $\Delta$ valid adversarial links with the highest noise to the target node $u$. As the intuitive idea is that adversarial links with higher noise will negatively influence the neighborhood aggregation of the target node on a larger scale, we believe this greedy attack method will also achieve considerable performance.

	\begin{algorithm}[]
		\caption{Noise and Margin-based Attack (NMA)}
		\label{alg:nma}
		\raggedright\textbf{Input}: Original graph $G=(V,E,X)$, target node $u$, attack budget $\Delta$, size of candidates $\delta_1$.\\
		\textbf{Output}: Generated adversarial graph $G^{\prime} = (V, E^{\prime},X)$.\\
		
		\begin{algorithmic}[1] 
			\STATE Train a surrogate GCN model $f_{\theta}^s$ based on original $G$.
			\STATE $Z \gets $ Record the output of the last layer of model $f_{\theta}^s$ for all nodes.
			\STATE $\widetilde{\rm LN} \gets $ Calculate $\widetilde{\rm LN}(u,v)$ via $Z$ for all valid adversarial links $(u, v)$ based on (\ref{eq:aln}).
			\STATE $\widehat{\rm LN} \gets $ Sort the obtained $\widetilde{\rm LN}$ in descending order.
			\STATE $\hat{E} \gets $ Transform the sorted $\widehat{\rm LN}$ to the corresponding link list.
			\STATE $E_{\rm cand} \gets $ Construct the candidates by only retaining the top valid $\delta_1$ links with the highest noise, i.e., $\hat{E}[0:\delta_1-1]$.
			\STATE $E^{\prime} \gets E$.
			\FOR {each $i \in \{0, \cdots ,\Delta - 1\}$}
			\STATE Calculate $\mathcal{L}_{\rm atk}$ for all valid links in $E_{\rm cand}$ based on (\ref{eq:loss_attack1}), respectively.  
			\STATE $e \gets $ Select the valid link with the minimal $\mathcal{L}_{\rm atk}$.  
			\STATE $E^{\prime} = E^{\prime} \cup e$.
			\ENDFOR
			\RETURN $G' = (V, E^{\prime}, X)$.
		\end{algorithmic}
	\end{algorithm}

	\subsection{Noise and Margin-Based Attacks}
	Although the noise value of adversarial links can reflect the potential harmfulness of future aggregations to some extent, greedily adding the corresponding adversarial links based on the noise value may not yield the best attack performance. As NGA only utilizes the original noise value from the clean graph, it ignores the change of the latest noise value after the injection of some links. Therefore, combining the targeted attack loss in (\ref{eq:loss_attack1}), we propose a noise and classification margin-based attack (NMA) strategy, in which the major steps are similar to NETTACK \cite{zugner2018adversarial}. However, NETTACK needs to take all possible adversarial links into consideration, while the proposed NMA only retains a small ratio of links with the strong attack effect (i.e., higher noise value) to be the final candidates.

	\begin{table*}[]
		\centering
		\caption{The top 10 attack sequences ordered by CM of a target node (i.e. id = 422) on Cora dataset where the attack budgets are from 1 to 3. We simplify the adversarial links as the combination of the adversarial nodes as they all have the same target node. The boldfaced results indicate that the corresponding sequence exists in the top 10 sequences on the prior budget.}
			\begin{tabular}{c|cc|cc|cc}
				\bottomrule\bottomrule
				\multirow{2}{*}{\textbf{Index}} & \multicolumn{2}{c|}{\textbf{$\Delta$ = 1}}            & \multicolumn{2}{c|}{\textbf{$\Delta$ = 2}}                 & \multicolumn{2}{c}{\textbf{$\Delta$ = 3}}                      \\ \cline{2-7} 
				& \multicolumn{1}{c|}{\textbf{Sequence}}  & ${\textbf{CM}}$     & \multicolumn{1}{c|}{\textbf{Sequence}}       & ${\textbf{CM}}$     & \multicolumn{1}{c|}{\textbf{Sequence}}            & ${\textbf{CM}}$     \\ \hline\hline
				1                      & \multicolumn{1}{c|}{1669} & -0.3245 & \multicolumn{1}{c|}{\textbf{1669}, 2167} & -0.4459 & \multicolumn{1}{c|}{\textbf{1669}, \textbf{2167}, 2259} & -0.4742 \\ \hline
				2                      & \multicolumn{1}{c|}{2167} & -0.3202 & \multicolumn{1}{c|}{\textbf{1669}, 2259} & -0.4452 & \multicolumn{1}{c|}{\textbf{1669}, 2168, \textbf{2259}} & -0.4739 \\ \hline
				3                      & \multicolumn{1}{c|}{2259} & -0.3181 & \multicolumn{1}{c|}{\textbf{1669}, 2168} & -0.4443 & \multicolumn{1}{c|}{1281, \textbf{1669}, \textbf{2167}} & -0.4733 \\ \hline
				4                      & \multicolumn{1}{c|}{2168} & -0.3135 & \multicolumn{1}{c|}{\textbf{2167}, 2259} & -0.4439 & \multicolumn{1}{c|}{1281, \textbf{1669}, \textbf{2259}} & -0.4733 \\ \hline
				5                      & \multicolumn{1}{c|}{1281} & -0.3057 & \multicolumn{1}{c|}{2168, \textbf{2259}} & -0.4424 & \multicolumn{1}{c|}{1281, \textbf{1669}, \textbf{2168}} & -0.4731 \\ \hline
				6                      & \multicolumn{1}{c|}{234}  & -0.2908 & \multicolumn{1}{c|}{1281, \textbf{1669}} & -0.4417 & \multicolumn{1}{c|}{\textbf{1669}, \textbf{2167}, 2168} & -0.4730 \\ \hline
				7                      & \multicolumn{1}{c|}{697}  & -0.2899 & \multicolumn{1}{c|}{1281, \textbf{2167}} & -0.4401 & \multicolumn{1}{c|}{1281, \textbf{2167}, \textbf{2259}} & -0.4728 \\ \hline
				8                      & \multicolumn{1}{c|}{535}  & -0.2563 & \multicolumn{1}{c|}{1281, \textbf{2259}} & -0.4397 & \multicolumn{1}{c|}{1281, \textbf{2168}, \textbf{2259}} & -0.4725 \\ \hline
				9                      & \multicolumn{1}{c|}{342}  & -0.2548 & \multicolumn{1}{c|}{1281, \textbf{2168}} & -0.4388 & \multicolumn{1}{c|}{\textbf{1669}, \textbf{2167}, 234}  & -0.4725 \\ \hline
				10 & \multicolumn{1}{c|}{2404} & -0.2527 & \multicolumn{1}{c|}{\textbf{2167}, 2168} & -0.4383 & \multicolumn{1}{c|}{\textbf{2167}, 2168, \textbf{2259}} & -0.4724 \\ \bottomrule\bottomrule
			\end{tabular}%
		\label{table:cm}
	\end{table*}

	Specifically, we first utilize the ranking of  $\widetilde{\rm LN}$ to control the size of final candidates, as the links with higher noise tend to mislead the target node more in the neighborhood aggregations. The detailed procedure of NMA is given in Algorithm \ref{alg:nma}. We only reserve the top $\delta_1$ valid links with higher noise to be the final candidates before we move into the margin calculation step (i.e., {\em line 6}). By employing the above candidate refining mechanism, the searching space is greatly reduced as we only need to evaluate a small part of link perturbations.

	\begin{algorithm}[t]
		\caption{Noise and Margin-based Attack-Boost (NMAB)}
		\label{alg:nmab}
		\raggedright\textbf{Input}: Original graph $G=(V,E,X)$, target node $u$, attack budget $\Delta$, size of candidates $\delta_2$, size of single optimal list $len_{\rm sin}$, size of retain list $len_{\rm re}$.\\
		\textbf{Output}: Generated adversarial graph $G^{\prime} = (V, E^{\prime},X)$.\\
		
		\begin{algorithmic}[1] 
			\STATE Train a surrogate GCN model $f_{\theta}^s$ based on original $G$.
			\STATE $Z \gets $ Record the output of the last layer of model $f_{\theta}^s$ for all nodes.
			\STATE $\widetilde{\rm LN} \gets $ Calculate $\widetilde{\rm LN}(u,v)$ via $Z$ for all valid adversarial links $(u, v)$ based on (\ref{eq:aln}).
			\STATE $\widehat{\rm LN} \gets $ Sort the obtained $\widetilde{\rm LN}$ in descending order.
			\STATE $\hat{E} \gets $ Transform the sorted $\widehat{\rm LN}$ to the corresponding link list.
			\STATE $E_{\rm cand} \gets $ Construct the candidates by only retaining the top $\delta_2$ valid links with the highest noise, i.e., $\hat{E}[0:\delta_2-1]$.

			\STATE Calculate $\mathcal{L}_{\rm atk}$ for all links in $E_{\rm cand}$ based on (\ref{eq:loss_attack1}), respectively.  
			\STATE $E_{\rm sin} \gets $ Record the top $len_{\rm sin}$ links which have the highest $\mathcal{L}_{\rm atk}$ among all $\delta_2$ candidate links.
			
			\STATE $E_{\rm optimal} \gets $Record the top $len_{\rm re}$ links which have the highest $\mathcal{L}_{\rm atk}$ among all $\delta_2$ candidate links.
			
			\FOR {each $i \in \{1, \cdots ,\Delta - 1\}$}
			
			\STATE $E_{\rm current} \gets$ Construct the valid ($i+1$)-length attack list by combining the optimal $(1)$-length attack list $E_{\rm sin}$ and the optimal ($i$)-length attack list $E_{\rm optimal}$.  
			
			\STATE Calculate $\mathcal{L}_{\rm atk}$ for all link sequences in $E_{\rm current}$ based on (\ref{eq:loss_attack1}), respectively.
			\STATE $E_{\rm optimal} \gets $ Select the top $len_{\rm re}$ attack sequence with the minimal $\mathcal{L}_{\rm atk}$.
			\ENDFOR
			\STATE $E^{*} \gets $ Select the sequence with the minimal $\mathcal{L}_{\rm atk}$ in $E_{\rm optimal}$ as the optimal attack sequence.
			\STATE $E^{\prime} = E \cup E^{*}$.
			\RETURN $G' = (V, E^{\prime}, X)$.
		\end{algorithmic}
	\end{algorithm}

	\subsection{Noise and Margin-Based Attacks-Boost}
	In the above, we propose NMA by taking the noise value of links as a candidate selection metric, and further utilising the classification margin as the final indicator. But similar to NETTACK, our NMA also focuses on single step optimization as we will select the current optimal adversarial link decreasing the classification margin the most during each attack budget. As a result, we may ignore other adversarial link combinations which have more powerful attack performance in a multiple steps optimization perspective. Based on the traditional robustness evaluation study of complex networked systems \cite{zhu2014revealing}, an interesting finding is that the powerful ($i+1$)-length attack sequences usually contain the strong ($i$)-length attack sequences. It is also straightforward that the combination of strong sequences will lead to a better attack performance.

	To investigate whether this phenomenon exists in our problem, we conduct a similar empirical study on a specific target node (i.e., id = 422) of the Cora dataset, whose statistics will be given in Section \ref{sec:data}. Particularly, to reduce the full searching space, we only consider the adversarial links in NMA that are controlled by the size of candidates $\delta_1$ as the valid links for simplicity. By utilizing the classification margin as the evaluation metric, we observe that this phenomenon also exists in our task, as shown in Table \ref{table:cm}. For example, the optimal (2)-link attack sequence (i.e., \{1669, 2167\}) contains the optimal (1)-link attack sequence (i.e., \{1669\}), the optimal (3)-link attack sequence (i.e., \{1669, 2167, 2259\}) contains the optimal (2)-link attack attack sequence (i.e., \{1669, 2167\}), etc. Therefore, we know that the strong attack sequences in prior steps can help search for more powerful attack sequences in the next step, which can be considered a multiple steps optimization process.

	Therefore, following the above findings, we propose an improved strategy, NMAB, to boost the attack performance of NMA. The major steps of NMAB are similar to NMA, as shown in Algorithm \ref{alg:nmab}. The only difference is, NMAB will not greedily select the adversarial link with the lowest CM during each step, but maintain two optimal lists with pre-defined sizes including the optimal ($1$)-link attack sequence and optimal ($i$)-link attack sequence (i.e., {\em lines 8-14}). Specifically, $E_{\rm sin}$ represents the optimal (1)-link attack sequence with the size of $len_{\rm sin}$, and $E_{\rm optimal}$ indicates the optimal ($i$)-link attack sequence with the size of $len_{\rm re}$. For each attack budget, we will construct the current attack candidates (i.e., sequence length $ = i+1$) by combining all the valid combinations of list $E_{\rm optimal}$ (i.e., sequence length $= i$) and list $E_{\rm sin}$ (i.e., sequence length $= 1$). Finally, we will return the optimal attack sequence in the $(\Delta)$-length attack list (i.e., $E^{*}$) to generate the optimal adversarial links. It is worth noting that, compared to NMA which focuses on selecting the current optimal link during each attack budget from a local perspective, NMAB can select the optimal link(s) in multiple attack budgets from a more global perspective.

    \begin{figure*}[t]
        \centering
        \includegraphics[width=1\textwidth]{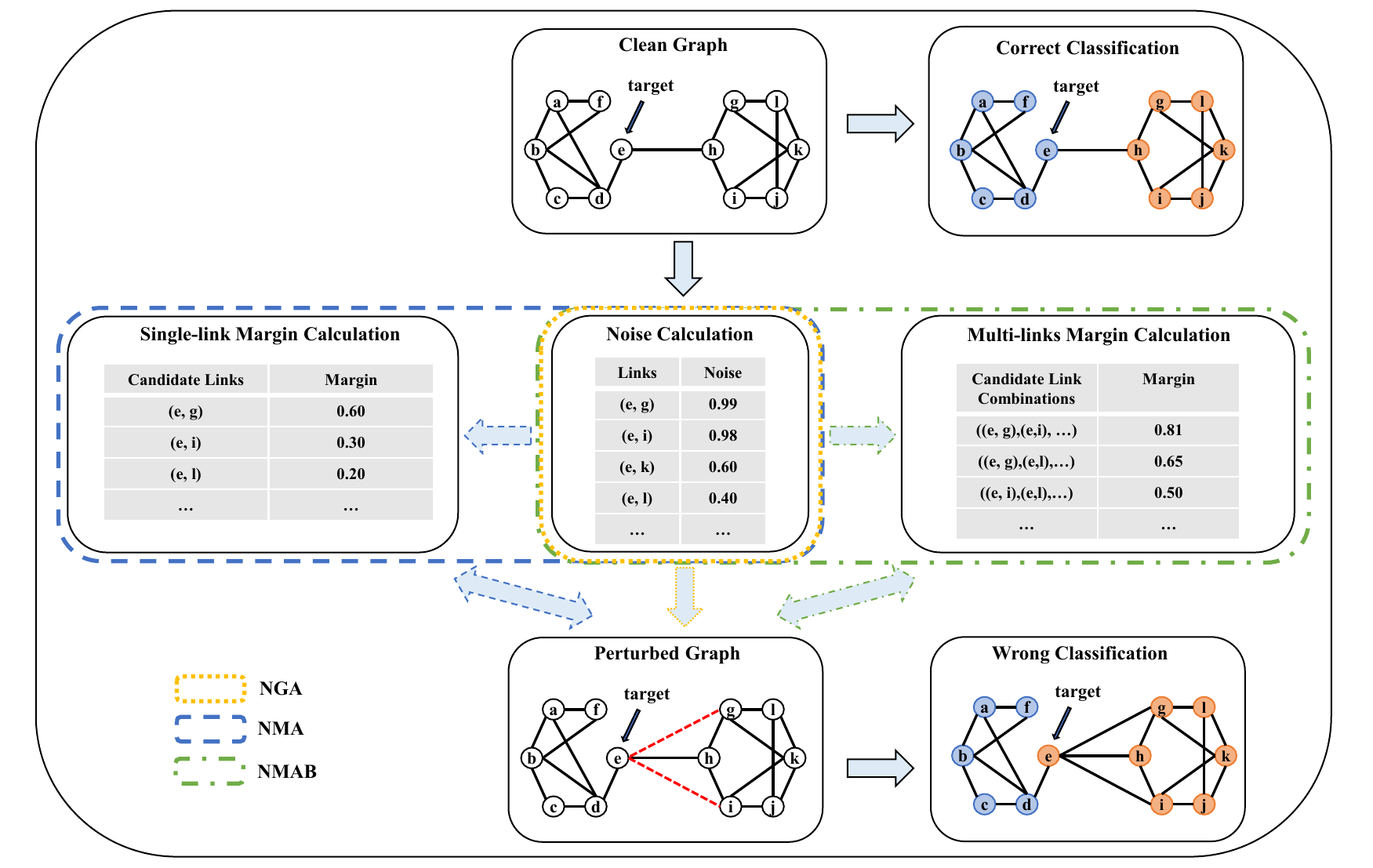} 
        \caption{A systematic framework of targeted attacks based on the proposed three strategies.}
        \label{fig:framework}
    \end{figure*}

	\subsection{Overall Framework}
	To sum up, based on the proposed noise of nodes, we propose three attack strategies including NGA, NMA, and NMAB from different perspectives. The overall framework of the attack procedure based on the proposed three strategies is given in Fig. \ref{fig:framework}. In the targeted attack scenario, we will first select a node among all possible nodes as the target node, and then employ the specific strategies (i.e., NGA, NMA, or NMAB) to generate the optimal adversarial links. Finally, we will evaluate the attack performance via the GNN model trained on the clean graph by checking whether the target node will be predicted to a wrong class.

	\section{Experiments}\label{sec:e}
	In this section, we conduct various experiments to verify the effect of the proposed three attack strategies. To be specific, we first introduce the statistical details of the datasets. Then, we demonstrate the baselines for comparisons and the GNNs being attacked. Following that, we introduce the specific parameter settings of models and methods. Finally, we present the experimental results together with the corresponding analysis. The code for reproduction will be publicly available at \href{https://github.com/alexfanjn/Noise-propagation-attack}{https://github.com/alexfanjn/Noise-propagation-attack}, depending on the acceptance.
	
	\subsection{Datasets}\label{sec:data}
	We evaluated the proposed strategies on three representative benchmark datasets including Cora, Citeseer, and Pubmed \cite{sen2008collective}. Following the settings in the previous study \cite{dai2018adversarial}, we extracted the largest connected component of them, and the statistical information is given in Table \ref{tab:statistic}.

	\subsection{Baselines}
	We chose three recent methods as baselines to compare with the proposed NGA, NMA, and NMAB. The details are as follows.
	\begin{enumerate}
		\item \textbf{NETTACK} \cite{zugner2018adversarial}: NETTACK is a strong baseline that attacks the structures and features based on classification margin. Since we focus on structure attacks, NETTACK is only allowed to attack the topological connections for fair comparison.
		
		\item \textbf{FGA} \cite{chen2018fast}: FGA is a gradient-based targeted attack method. Based on the assumption that the adversarial links with the maximum absolute gradient will influence the loss function the most, FGA greedily chooses the corresponding links.

		\item \textbf{SGA} \cite{li2021adversarial}: SGA is another gradient-based targeted attack method by only considering the $k$-hop subgraph rather than the entire graph. Specifically, SGA largely reduces the size of the candidate set because the adding link operations will only occur when the nodes belong to the second possible class of the target node.
	\end{enumerate}
	Among them, NETTACK and SGA used SGC as the surrogate model, while FGA and our methods adopted GCN as the surrogate model. Particularly, we do not compared our methods with other attack strategies such as GradArgmax \cite{dai2018adversarial}, DICE \cite{waniek2018hiding}, etc.,  as FGA and SGA already achieved a better attack performance than them.

	\begin{table*}[t]
		\centering
		\caption{Attack success rate of the baselines against GNNs on three datasets. The last column indicates the average rank of each baseline among different models and datasets. The top 3 results in each column and the best rank are boldfaced.}
        \resizebox{\linewidth}{!}{
			\begin{tabular}{c|ccc|ccc|ccc|c}
				\bottomrule\bottomrule
				\multirow{2}{*}{\textbf{Methods}} &
				\multicolumn{3}{c|}{\textbf{Cora}} &
				\multicolumn{3}{c|}{\textbf{Citeseer}} &
				\multicolumn{3}{c|}{\textbf{Pubmed}} &
				\multicolumn{1}{c}{\multirow{2}{*}{\textbf{Ranks}}} \\ \cline{2-10}
				&
				\multicolumn{1}{c|}{\textbf{GCN}} &
				\multicolumn{1}{c|}{\textbf{SGC}} &
				\textbf{GAT} &
				\multicolumn{1}{c|}{\textbf{GCN}} &
				\multicolumn{1}{c|}{\textbf{SGC}} &
				\textbf{GAT} &
				\multicolumn{1}{c|}{\textbf{GCN}} &
				\multicolumn{1}{c|}{\textbf{SGC}} &
				\textbf{GAT} &
				\multicolumn{1}{c}{} \\ \hline\hline
				NETTACK &
				\multicolumn{1}{c|}{\textbf{0.9044}} &
				\multicolumn{1}{c|}{\textbf{0.9278}} & \textbf{0.7388}
				& 
				\multicolumn{1}{c|}{0.8344} &
				\multicolumn{1}{c|}{\textbf{0.8614}} & \textbf{0.6496}
				&
				\multicolumn{1}{c|}{0.9684} &
				\multicolumn{1}{c|}{\textbf{0.9874}} & \textbf{0.9114}
				& 2.22
				\\ \hline
				FGA &
				\multicolumn{1}{c|}{0.9030} &
				\multicolumn{1}{c|}{0.8184} & 0.5472
				&
				\multicolumn{1}{c|}{\textbf{0.8842}} &
				\multicolumn{1}{c|}{0.8526} &0.5156
				&
				\multicolumn{1}{c|}{\textbf{0.9772}} &
				\multicolumn{1}{c|}{\textbf{0.9630}} &\textbf{0.9014}
				&3.56
				\\ \hline
				SGA &
				\multicolumn{1}{c|}{0.8540} &
				\multicolumn{1}{c|}{0.8242} &0.5866
				&
				\multicolumn{1}{c|}{0.6686} &
				\multicolumn{1}{c|}{0.6722} &0.4708
				&
				\multicolumn{1}{c|}{0.9604} &
				\multicolumn{1}{c|}{0.9546} &0.8510
				&4.89
				\\ \hline
				NGA (Ours) &
				\multicolumn{1}{c|}{0.7468} &
				\multicolumn{1}{c|}{0.6746} &0.4626
				&
				\multicolumn{1}{c|}{0.6222} &
				\multicolumn{1}{c|}{0.6854} &0.6022
				&
				\multicolumn{1}{c|}{0.9072} &
				\multicolumn{1}{c|}{0.8746} &0.7426
				&5.67
				\\ \hline
				NMA (Ours) &
				\multicolumn{1}{c|}{\textbf{0.9360}} &
				\multicolumn{1}{c|}{\textbf{0.8738}} &\textbf{0.6082}
				&
				\multicolumn{1}{c|}{\textbf{0.9342}} &
				\multicolumn{1}{c|}{\textbf{0.8782}} &\textbf{0.6368}
				&
				\multicolumn{1}{c|}{\textbf{0.9802}} &
				\multicolumn{1}{c|}{0.9536} &0.8872
				&3.00
				\\ \hline
				NMAB (Ours) &
				\multicolumn{1}{c|}{\textbf{0.9676}} &
				\multicolumn{1}{c|}{\textbf{0.9166}} &\textbf{0.6580}
				&
				\multicolumn{1}{c|}{\textbf{0.9532}} &
				\multicolumn{1}{c|}{\textbf{0.9074}} &\textbf{0.6702}
				&
				\multicolumn{1}{c|}{\textbf{0.9842}} &
				\multicolumn{1}{c|}{\textbf{0.9598}} &\textbf{0.8954}
				&\textbf{1.67}
				\\ \bottomrule\bottomrule
		\end{tabular}
        }
	\label{table:asr}
	\end{table*}
	
	\subsection{Targeted Models}
	For the GNN models to be attacked, we selected three traditional GNNs including GCN, SGC, and GAT as representatives. The details of these methods are as follows.
	
	\begin{table}[]
		\centering
		\caption{The statistics of datasets.}
			\begin{tabular}{c|c|c|c|c|c}
				\bottomrule\bottomrule
				\textbf{Datasets} &
				\textbf{\#Nodes} &
				\textbf{\#Links} &
				\textbf{\#Features} &
				\textbf{\#Classes} &
				\textbf{Avg. Degree} \\ \hline\hline
				Cora     & 2,485  & 5,069  & 1,433 & 7 & 4.08 \\ \hline
				Citeseer & 2,100  & 3,668  & 3,703 & 6 & 3.48 \\ \hline
				Pubmed   & 19,717 & 44,324 & 500   & 3 & 4.50 \\ \bottomrule\bottomrule
		\end{tabular}

		\label{tab:statistic}
		
	\end{table}
	
	\begin{enumerate}
		\item \textbf{GCN} \cite{kipf2016semi}: GCN is a traditional GNN that obtains the low dimensional representation of nodes via aggregating the local structural and feature information of neighbors.
		
		\item \textbf{SGC} \cite{wu2019simplifying}: SGC is the simplified version of GCN by removing the activation functions in the middle layers of GCN, which achieves comparable performance as GCN but requires smaller training complexity.

		\item \textbf{GAT} \cite{velivckovic2017graph}: GAT further improves the performance of the original GCN by proposing a masked self-attention mechanism. In this way, GAT gives different weights to different neighbors, rather than directly assigning the weight of each neighbor based on its degree information.
		
	\end{enumerate}
	Particularly, as the proposed attack strategies were trained based on the surrogate GCN model in this work, the attack performance on other GNNs can show the generalization ability of our methods.

	\begin{table*}[t]
		\centering
		\caption{Attack success rate of the proposed methods against GCN under different dissimilarity metrics on three datasets, where EUC, COS, and ENT refer to Euclidean distance, Cosine distance, and entropy distance, respectively. The best results in each method under the same dataset are boldfaced.}
			\begin{tabular}{c|ccc|ccc|ccc}
				\bottomrule\bottomrule
				\multirow{2}{*}{\textbf{Methods}} &
				\multicolumn{3}{c|}{\textbf{Cora}} &
				\multicolumn{3}{c|}{\textbf{Citeseer}} &
				\multicolumn{3}{c}{\textbf{Pubmed}} \\ \cline{2-10}
				&
				\multicolumn{1}{c|}{\textbf{EUC}} &
				\multicolumn{1}{c|}{\textbf{COS}} &
				\textbf{ENT} &
				\multicolumn{1}{c|}{\textbf{EUC}} &
				\multicolumn{1}{c|}{\textbf{COS}} &
				\textbf{ENT} &
				\multicolumn{1}{c|}{\textbf{EUC}} &
				\multicolumn{1}{c|}{\textbf{COS}} &
				\textbf{ENT} \\ \hline\hline
				NGA &
				\multicolumn{1}{c|}{0.679} &
				\multicolumn{1}{c|}{0.778} & \textbf{0.781}
				& 
				\multicolumn{1}{c|}{0.451} &
				\multicolumn{1}{c|}{\textbf{0.666}} & 0.647
				&
				\multicolumn{1}{c|}{0.721} &
				\multicolumn{1}{c|}{0.749} & \textbf{0.879}
				
				\\ \hline
				NMA &
				\multicolumn{1}{c|}{0.865} &
				\multicolumn{1}{c|}{0.869} & \textbf{0.948}
				&
				\multicolumn{1}{c|}{0.807} &
				\multicolumn{1}{c|}{0.821} &\textbf{0.914}
				&
				\multicolumn{1}{c|}{0.894} &
				\multicolumn{1}{c|}{0.901} &\textbf{0.976}
				
				\\ \hline
				NMAB &
				\multicolumn{1}{c|}{0.946} &
				\multicolumn{1}{c|}{0.906} &\textbf{0.975}
				&
				\multicolumn{1}{c|}{0.917} &
				\multicolumn{1}{c|}{0.868} &\textbf{0.936}
				&
				\multicolumn{1}{c|}{0.931} &
				\multicolumn{1}{c|}{0.913} &\textbf{0.978}
				
				\\ \bottomrule\bottomrule
		\end{tabular}
		\label{table:metrics}
		
	\end{table*}

	\subsection{Parameter Settings}
	
	Each datasets is randomly split into the training set (10\%), validation set (10\%), and test set (80\%). All of the experimental results are the average performance among 5 different splits. Moreover, all of the baselines and attacked GNNs are implemented by utilizing the opensource platform {\em DeepRobust} \cite{jin2020adversarial}.

	For the specific attacks, we randomly selected 1000 nodes in the test set as the target nodes for each dataset. To ensure the unnoticeable perturbations, we followed the common configuration of several studies \cite{zugner2018adversarial,li2021adversarial} by setting the attack budget $\Delta$ as the degree of target node for all attack methods. For NMA, we set the size of candidate $\delta_1$ as $5\Delta$. For NMAB, we set the size of candidate $\delta_2$, the size of single optimal list $len_{\rm sin}$,  and size of retain list ${len_{\rm re}}$ as $10\Delta$, $3\Delta$, and 10, respectively. Except for the above settings, all other parameter settings of the baselines and attacked models are adopted to the default settings.

	\begin{figure*}[t]
		\centering
		\subfigure{
			\begin{minipage}[]{0.33\linewidth}
				\includegraphics[scale=0.28]{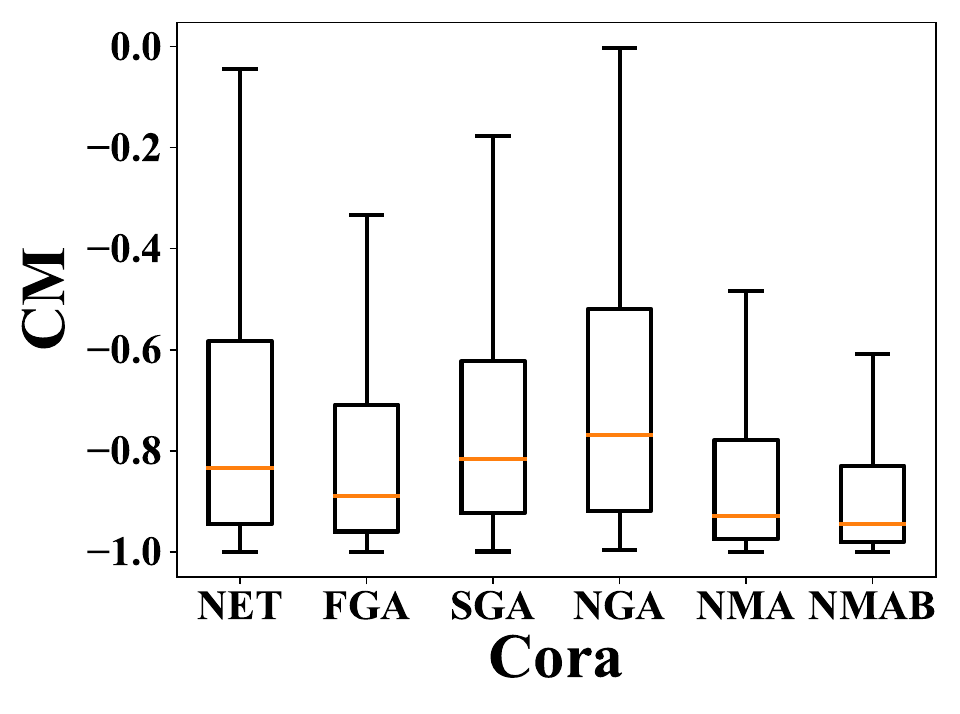}
			\end{minipage}%
		}%
		\subfigure{
			\begin{minipage}[]{0.33\linewidth}
				\includegraphics[scale=0.28]{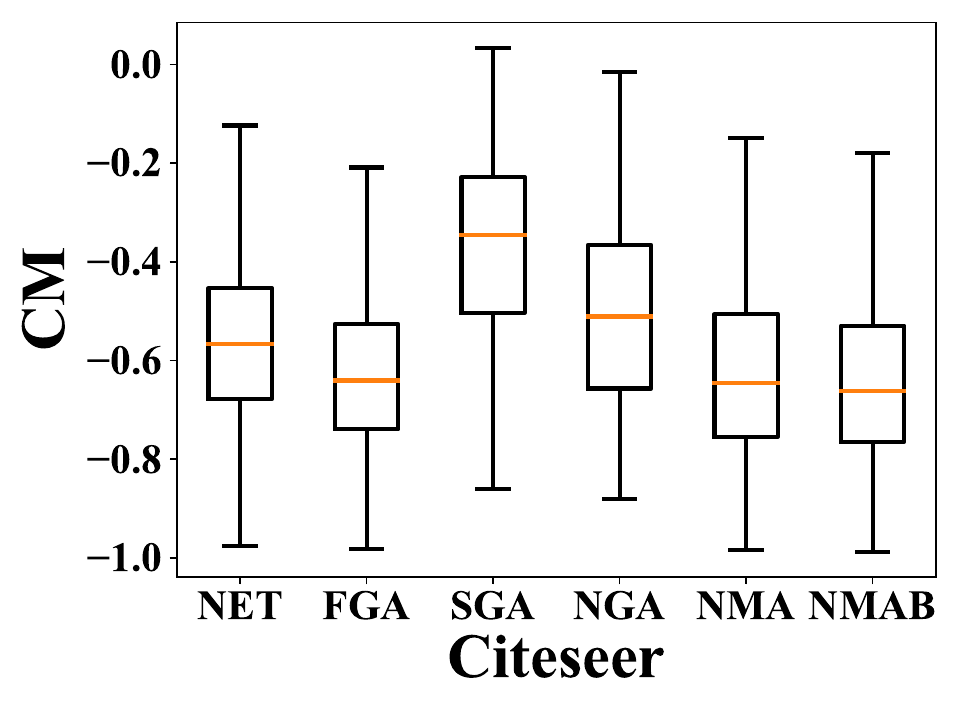}
			\end{minipage}%
		}%
		\subfigure{
			\begin{minipage}[]{0.33\linewidth}
				\includegraphics[scale=0.28]{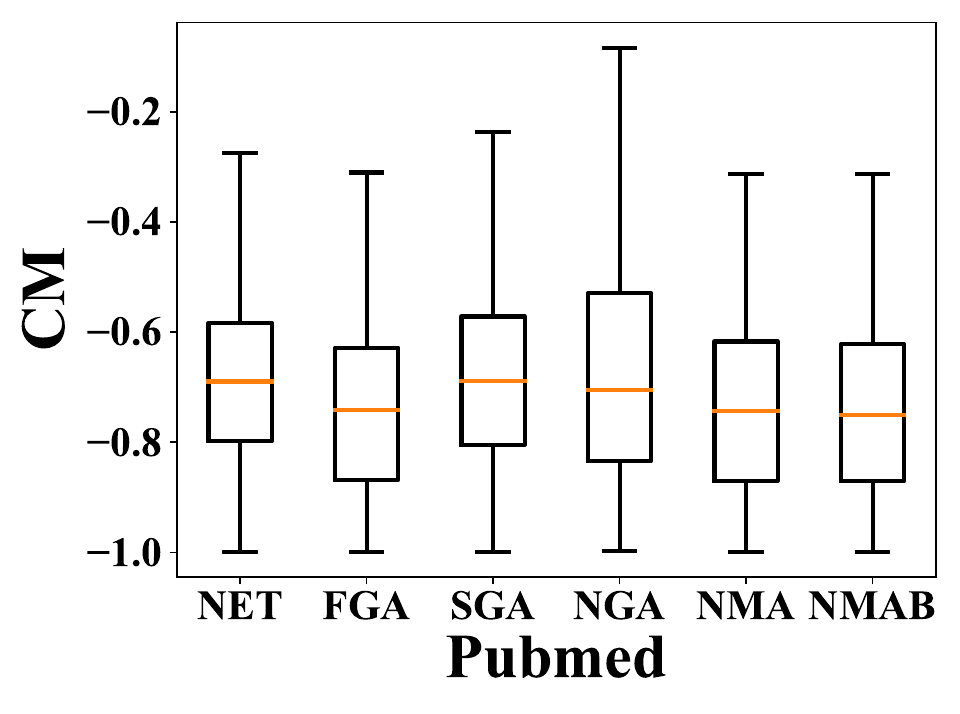}
			\end{minipage}%
		}%
		
		\caption{Classification margin comparisons between different methods on three datasets.}
		\label{fig:cm}
	\end{figure*}

	\subsection{Experimental Results and Analysis}
	\subsubsection{Attack Success Rate}
	We first investigate the attack success rate of different methods under different datasets and models. Attack success rate refers to the ratio between the number of target nodes that are attacked successfully and the total number of target nodes. From Table \ref{table:asr}, we can find that NGA obtains a good attack performance in some cases even though its idea is simple. For the proposed three strategies, NMAB performs the best, NMA is better than NETTACK in most cases, and both NMA and NMAB are better than NGA to a large extent. We think the slight gaps between NMA and NETTACK in some cases are acceptable as NMA has a much smaller size of candidates than NETTACK, so NETTACK will reasonably have more chances to obtain better solutions. For the comparisons with other baselines based on the overall ranks (i.e., the last column of Table \ref{table:asr}), NMAB obtains the overall best attack success rate over all other methods, indicating the superiority of its multiple steps optimization. Following NMAB, NMA obtains a comparable performance as NETTACK, since they rank third and second place, respectively. The above results suggest that, even though NETTACK has a larger candidate set than NMAB, NETTACK may drop into local optimums due to its single step optimization. Moreover, although we utilize GCN as the surrogate model, the generated adversarial links also achieve remarkable attack performance on both SGC and GAT models, showing the generalizability of the proposed methods.

	\subsubsection{Classification Margin}
	 We also analyze the classification margin (CM) obtained by each baseline. Fig. \ref{fig:cm} is the box plot of the corresponding classification margins of the GCN model on three datasets. We can find that NMA and NMAB are the best two strategies that obtain the optimal ${\rm CM}$ than all of the other baselines, and NMAB is slightly better than NMA. More importantly, the worst cases (i.e., highest CM) of NMA and NMAB are also better than other strategies mostly. The above findings demonstrate that the proposed strategies can obtain a comparable or even better attack performance than current methods even though the size of candidate adversarial links has been largely reduced via the rankings of noise values, especially for NMA and NMAB.

	\subsubsection{Impact of Different Dissimilarity Metrics} 
	In the default settings, we adopt entropy to measure the dissimilarity of different node pairs. To further investigate the impact of different dissimilarity metrics, we employ two other classic distance metrics including Euclidean distance and Cosine distance, to characterize the difference of the representation vectors of nodes. Specifically, we replace the calculation of entropy in (\ref{eq:dis}) to the specific calculation of Euclidean distance and Cosine distance during the candidate selection process in the proposed methods. Then we analyze the corresponding attack success rate under different dissimilarity measurements. As shown in Table \ref{table:metrics}, entropy distance obtains the overall best attack performance in the proposed three methods among all datasets. The above results indicate that, compared with traditional Euclidean distance and Cosine distance, entropy distance may be a better metric to characterize the dissimilarity of different nodes during the neighborhood aggregations of GNNs.

	\begin{table}[]
		\centering
		\caption{Class distributions of selected adversarial nodes on three datasets. `Sec. Poss.' refers to the second possible class.}
		\begin{tabular}{c|c|c|c|c}
			\bottomrule\bottomrule
			\textbf{Datasets} & \textbf{Methods} & \textbf{Same} & \textbf{Sec. Poss.} & \textbf{Others} \\ \hline\hline
			\multirow{2}{*}{Cora}     & NMA  & 0  & 0.3054 & 0.6946 \\ \cline{2-5} 
			& NMAB &   0     &   0.3130   &  0.6870     \\ \hline
			\multirow{2}{*}{Citeseer} & NMA  & 0.0100  & 0.3370 & 0.6619 \\ \cline{2-5} 
			& NMAB & 0.0016       & 0.3548     &  0.6436    \\ \hline
			\multirow{2}{*}{Pubmed}   & NMA  & 0 & 0.8936 & 0.1064 \\ \cline{2-5} 
			& NMAB &   0     &   0.8891   &  0.1109    \\ \bottomrule\bottomrule
		\end{tabular}%
		\label{tab:class}
	\end{table}

	\begin{figure}[t]
		\centering	
		\includegraphics[width=0.4\linewidth]{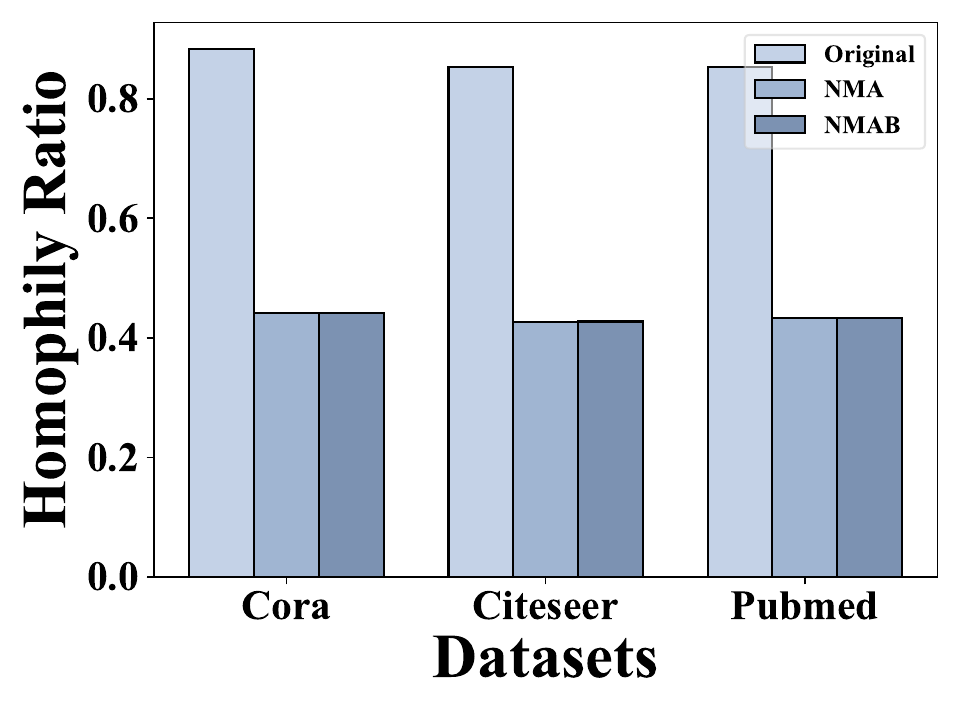} 
		\caption{Average homophily ratios of target nodes before and after NMA/NMAB attacks on three datasets.}
		\label{fig:homo}
	\end{figure}

	\subsubsection{Preference of Adversarial Node Selections}
	 In this subsection, we analyze the preference of adversarial node selections of the proposed methods. As NMA and NMAB perform better than NGA, we utilize the prior two as representative strategies. Specifically, the investigated properties include class distribution, degree, and predicted confidence of the selected nodes.

	\begin{figure*}[]
		\subfigure{
			\begin{minipage}[]{0.33\linewidth}
				\includegraphics[scale=0.28]{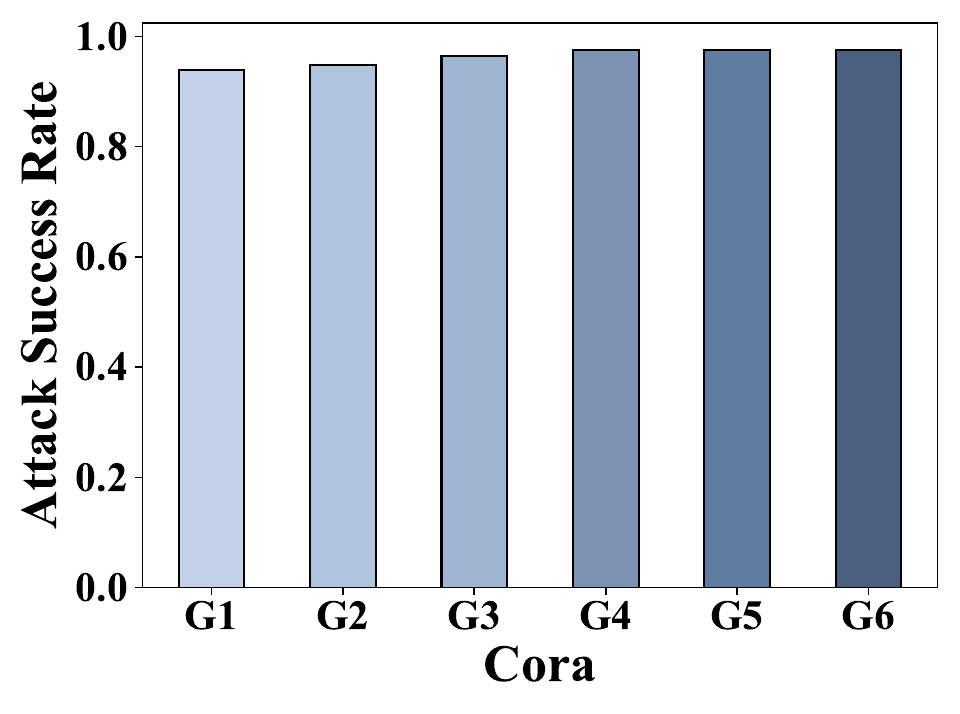}
			\end{minipage}%
		}%
		\subfigure{
			\begin{minipage}[]{0.33\linewidth}
				\includegraphics[scale=0.28]{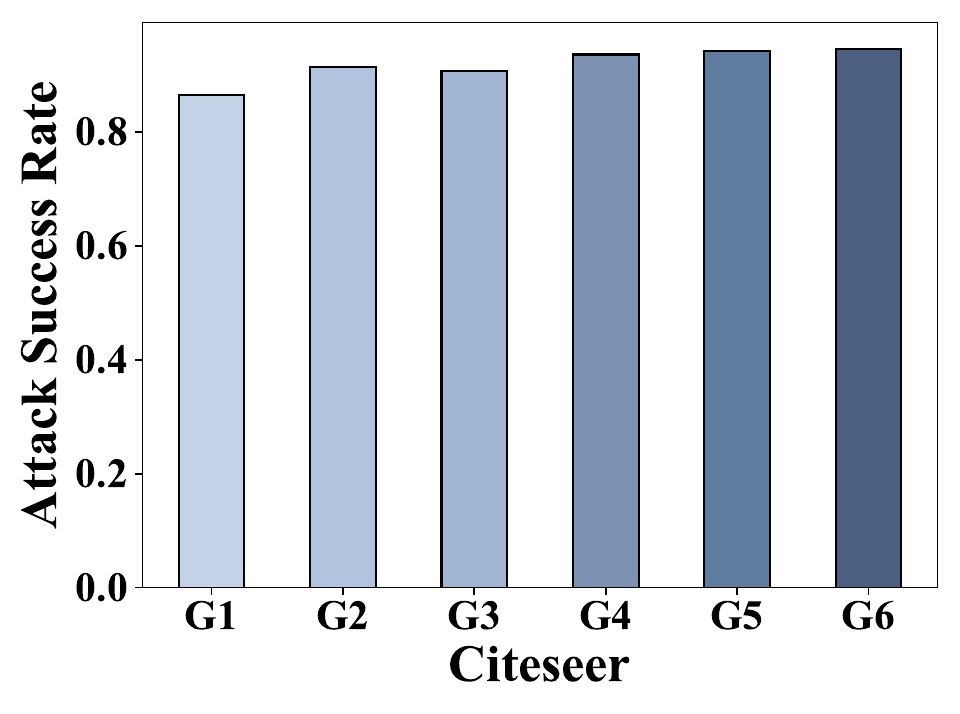}
			\end{minipage}%
		}%
		\subfigure{
			\begin{minipage}[]{0.33\linewidth}
				\includegraphics[scale=0.28]{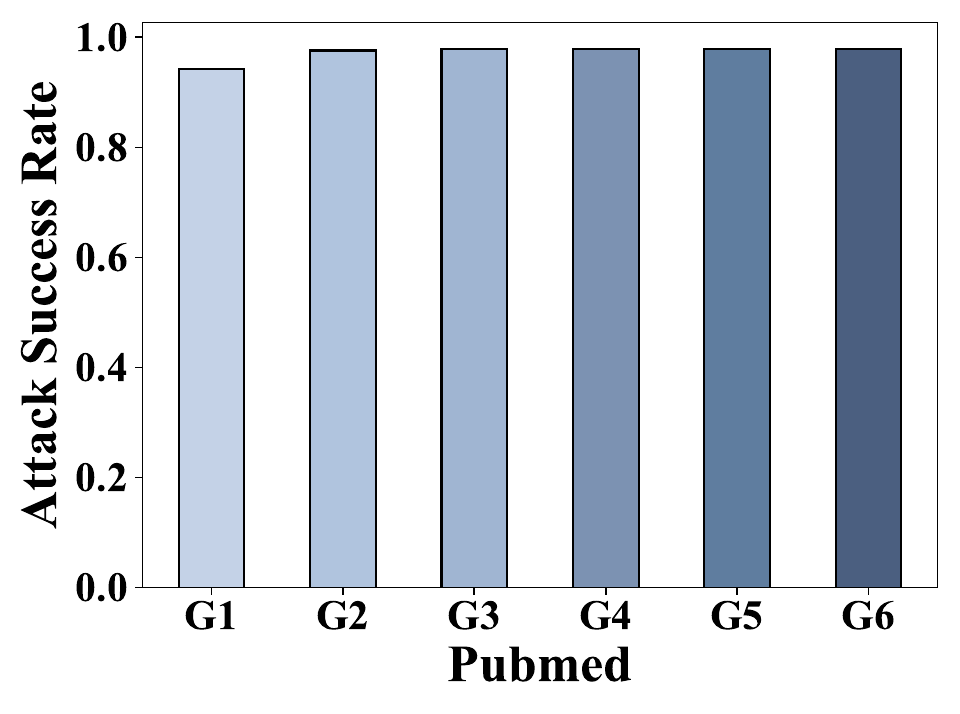}
			\end{minipage}%
		}%
		\centering
		\caption{Attack success rate of the proposed NMA and NMAB methods on three datasets under different parameter combinations.}
		\label{fig:pa}
	\end{figure*}

	\textbf{Class.}
	To explore the class of the selected adversarial nodes, we first analyze the change of homophily of target nodes before and after the attacks. Homophily \cite{pei2020geom, zhu2020beyond} is widely adopted to characterize the similarity of the target node and its neighbors. Specifically, the homophily of a specific node $i$ is given by the ratio of the number of neighbors with the same label to the number of all neighbors, which is as follows.
	
	\begin{equation}\label{eq:homo}
		{\rm homophily}_i = \frac{\# \ i's \ neighbors \ with \ same 	\ label}{\# \ i's \ all \ neighbors}. 
	\end{equation}
	
	Fig. \ref{fig:homo} illustrates the average homophily of target nodes before and after the attacks on three datasets, we can easily find that both NMA and NMAB tend to decrease the homophily of target nodes. In other words, both of them will likely connect the target node with those nodes whose classes are different from the target node. 
	
	To investigate the class distributions of the new neighbors, we divide different classes into three groups, namely the same class, the second possible class, and others. The first group refers to the case where the new neighbor belongs to the same class as the target node. The second group refers to the case where the new neighbors belong to the second possible class of the target node, which is consistent with the node selection idea of SGA, while the last group indicates other classes that do not meet the prior two groups. 
	
	As shown in Table \ref{tab:class}, almost none of the new neighbors are selected from the same class as the targeted node, and most of them belong to different classes (i.e., either the second possible class or others). The above finding is consistent with the intuitive fact that neighbors with the same class usually will help central nodes obtain a better representation, while the nodes with different classes usually will be harmful to the feature aggregation of central nodes. As for the other two groups, our methods do not always select nodes from the second possible class but give other classes large weights, except for the Pubmed dataset since it is only a three-class dataset. This phenomenon indicates the idea that SGA follows where the second possible class is the easiest way to achieve the wrong prediction may not always hold. Connecting the target node with the new neighbors belonging to other classes may lead to a better attack performance than connecting them with the nodes belonging to the second possible class.

	\textbf{Degree.} The first part of Table \ref{tab:dac} shows the degree properties of the selected adversarial nodes. As we can observe, the adversarial nodes usually have a low degree, which is even lower than the average degree of nodes in the clean datasets. The above finding supports our analysis in Section \ref{sec:np} that low degree nodes usually will have a larger influence on the neighborhood aggregation mechanism than high degree nodes.
	
	\textbf{Predicted Confidence.} Besides the above properties, we further analyze the average predicted confidence of the adversarial nodes. As shown in the second part of Table \ref{tab:dac}, Both NMA and NMAB tend to choose the nodes with relatively high confidence in the predicted class, which also refers to the fact that these kinds of low degree nodes usually have a higher value of noise than other nodes as they belong to classes different from the target node.

	To sum up, we can conclude that successful attacks usually select those adversarial nodes with different classes from the target nodes, low degree, and high predicted confidence. Connecting the specific target node with these nodes usually will lead to a more powerful attack performance than connecting with other nodes.

	\begin{table}[]
		\centering
		\caption{Average degree and confidence (\%) of selected adversarial nodes on three datasets.}
		\begin{tabular}{c|cc|cc}
			\bottomrule\bottomrule
			\multirow{2}{*}{\textbf{Datasets}} & \multicolumn{2}{c|}{\textbf{NMA}} & \multicolumn{2}{c}{\textbf{NMAB}} \\ \cline{2-5} 
			& \multicolumn{1}{c|}{\textbf{Degree}} & \textbf{Confidence}& \multicolumn{1}{c|}{\textbf{Degree}} & \textbf{Confidence} \\ \hline\hline
			Cora     & \multicolumn{1}{c|}{1.66}   & 99.88     & \multicolumn{1}{c|}{1.94}   & 99.91     \\ \hline
			Citeseer & \multicolumn{1}{c|}{1.60}   & 74.64     & \multicolumn{1}{c|}{1.75}   & 76.75     \\ \hline
			Pubmed   & \multicolumn{1}{c|}{1.15}   & 99.64     & \multicolumn{1}{c|}{1.22}   & 99.47     \\ \bottomrule\bottomrule
		\end{tabular}
		\label{tab:dac}
		
	\end{table}

	\begin{table}[]
		\caption{Average time (s) to generate adversarial samples of different methods.}
		\centering\setlength{\tabcolsep}{1.25mm}{
			\begin{tabular}{c|c|c|c|c|c|c}
				\bottomrule\bottomrule
				\textbf{Datasets} & \textbf{NETTACK} & \textbf{FGA} & \textbf{SGA} & \textbf{NGA} & \textbf{NMA} & \textbf{NMAB} \\ \hline\hline
				Cora & 0.880 & 0.103 & 0.022 & 0.004 & 0.237 & 1.226 \\\hline
				Citeseer & 0.676 & 0.093 & 0.042 & 0.003 & 0.188 & 0.904 \\\hline
				Pubmed & 30.852 & 18.016 & 0.287 & 0.010 & 1.431 & 7.294 \\\bottomrule
				\bottomrule
		\end{tabular}}%
		\label{tab:time}
		
	\end{table}

	\subsubsection{Comparison of Searching Space and Time Cost}
	Next, we compare the searching space of the proposed attack strategies with baselines. As we try to select the optimal perturbations to mislead the prediction of target nodes, our issue can be considered as a node/link selection problem. Assuming there are $n$ nodes in the clean graph and the attack budget for the target node is $\Delta$, the theoretical maximal searching space will be close to the combination number $C(n, \Delta)$, which will be a huge number with the increase of $n$. For NETTACK and FGA, as they will greedily select the optimal perturbation among all possible perturbations during each budget, their searching space is close to $\Delta\cdot n$. Particularly, the searching space of NETTACK will be slightly smaller than $\Delta\cdot n$ as it further requires the unnoticeable perturbations. Compared to them, SGA further reduces the searching space by only considering the nodes belonging to the second possible class. If we assume the number of nodes in the second possible class is $|C_s|$, then the searching space of SGA would be close to $\Delta\cdot |C_s|$.

	In terms of the proposed three methods, NGA directly selects the top $\Delta$ links with the highest noise, and thus the searching space of NGA is $\Delta$. For NMA, as we only consider the top $\delta_1$ links with the highest noise value during each budget, the searching space of NMA will be $\Delta\cdot\delta_1$. Finally, for NMAB, we need to evaluate the multiple optimal links under two recorded lists $E_{\rm sin}$ and $E_{\rm optimal}$ whose length are $len_{\rm sin}$ and $len_{\rm re}$, respectively. Therefore, the total searching space of NMAB should be $\delta_2 + len_{\rm sin} \cdot len_{\rm re} \cdot (\Delta-1)$, where the first term means the searching of top $len_{\rm sin}$ links among all $\delta_2$ candidates (i.e., {\em line 8 in Algorithm \ref{alg:nmab}}), and the second term indicates the possible searching space by combining the (1)-link optimal attack list $E_{\rm sin}$ and (i)-link optimal attack list $E_{\rm optimal}$ (i.e., {\em line 12 in Algorithm \ref{alg:nmab}}). Particularly, the searching space of NMAB will equal to NMA once the condition $\delta_1 = \delta_2 = len_{\rm sin} \cdot len_{\rm re}$ is satisfied. Based on the above discussions, as $n \gg |C_s| > \delta$ or $len_{\rm sin}$ or $len_{\rm re}$, the proposed methods will largely reduce searching space while promising a remarkable attack performance.
	
	To better demonstrate the efficiency of the proposed method, we further compare the average time cost for generating the adversarial samples of different attack strategies. As shown in Table \ref{tab:time}, NGA obtains the global optimal results because of its heuristic selections. As expected, NMA performs more efficient than NETTACK as NMA avoids unnecessary searching on the nodes with lower noise. Although NMAB requires relatively larger time cost to generate adversarial samples on Cora and Citeseer, it shows good scalability on Pubmed than NETTACK and FGA. Furthermore, SGA obtains the smallest time cost among all methods, but there are trade-offs between its attack performance and time cost. Combining with previous experiments on attack performance, we can observe that the proposed methods not only yield a strong attack effect but also show remarkable efficiency.

	\subsubsection{Parameters Analysis}

	Finally, we study the attack performance of NMA and NMAB under different parameter combinations. For NMA, we analyze the influence of the size of candidates $\delta_1$. For NMAB, we analyze the size of candidates $\delta_2$, the size of single optimal list $len_{\rm sin}$, and the size of retaining list $len_{\rm re}$. Generally speaking, a larger value of these parameters indicates a larger searching space, which refers to a larger possibility of obtaining the global optimal performance.

	Specifically, we set $\delta_1$ of NMA as $3\Delta$, $5\Delta$, and $10\Delta$. We then set the combination of $\{\delta_2, len_{\rm sin}, len_{\rm re}\}$ of NMAB as $\{10\Delta, 3\Delta, 10\}$, $\{10\Delta, 3\Delta, 2\Delta\}$ and $\{10\Delta, 10\Delta, 3\Delta\}$. We labeled the above 6 groups as G1 to G6. Fig. \ref{fig:pa} shows the corresponding results of different settings of the above parameters of NMA and NMAB. For NMA (i.e., G1-G3), a larger value of $\delta_1$ will truly lead to better performance in most cases, and NMA can achieve an excellent performance even when $\delta_1$ is a small value. In other words, we can achieve a good attack performance by only considering the candidate nodes with higher noise. The possible reason for the unstable performance of G3 on Citeseer is that NMA may drop into the local optimal in the early steps. As for NMAB (i.e., G4-G6), a similar trend can be obtained. NMAB also achieves a strong attack performance by limiting the searching space to a relatively small value via the noise value. The above observations support that the proposed noise can help fast localize the powerful adversarial nodes that negatively influence the aggregations of target nodes the most. Therefore, our candidate selection mechanism based on noise can be considered as a plugin to integrate into further attack methods in the pre-processing step, which can omit some unnecessary searching on those poorly-performed candidates.

	\section{Conclusion}\label{sec:c}

	In this work, we theoretically discuss the attack strength of different adversarial structural perturbations of graph neural networks, and then put forward the concept of noise to characterize them. By quantifying the noise value of adversarial links via entropy, we further propose three simple yet effective targeted attack strategies, namely the noise-based method (NGA), the noise and margin-based method (NMA), and the boost version of the noise and margin-based method considering multiple budgets at the same time (NMAB). The latter two methods can greatly reduce the searching space of traditional margin-based methods while yielding a strong attack effect. Comprehensive experimental results against various graph neural network models on the benchmark datasets demonstrate the superiority of the proposed methods. Particularly, the analysis of the properties of selected adversarial nodes also supports the effectiveness of the proposed noise concept. Theoretical proof on the toy model, together with extensive empirical experiments, also shows the rationality of the proposed noise concept. Future attack methods can integrate the proposed candidate refining framework to avoid unnecessary searching for perturbations with small noise. In addition, there are some interesting directions that need further investigation in the future. Firstly, we will investigate whether the proposed attack strategies can be applied to large-scale graphs to improve the generality. Secondly, we will explore how to ensure the effectiveness of our attack strategies against defense methods.

\begin{acks}
\end{acks}

\bibliographystyle{ACM-Reference-Format}
\bibliography{mybib}

\appendix
\section{Proof of Propositions}

\subsection{Proof of Proposition 4.1}

\begin{proposition} 
    Let $G = (A, X, E)$ be a simple graph, and $Y = \{0,1,\cdots,C-1\}$ be the possible label. We simplify the feature of each node to be a one-hot vector corresponding to the label of itself, denoted as $\mu(Y)$. Namely, the feature vector of node $u$ is $x_{u} = \mu(Y_u)$. Assuming that most of the original neighbors of each node belong to the same class, and the specific noise value of each adversarial link is the same. Consider a one-layer GCN where the output of node $u$ is $ h_u = \sigma(W \cdot \sum_{v \in \mathcal{N}(u)} \frac{1}{\sqrt{|\mathcal{N}_u| \cdot |\mathcal{N}_v|}} \cdot x_v)$, $\sigma$ is the softmax activation function, we have the following. 
    \begin{enumerate}
        \item From the perspective of target nodes, nodes with a lower degree will be easier to be attacked than those with a higher degree.
        \item From the perspective of adversarial nodes, nodes with a lower degree will influence the representation of the target node more than those with a higher degree.
    \end{enumerate} 
\end{proposition}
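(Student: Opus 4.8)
The plan is to track how the vector fed into the final softmax at the target node $u$ changes when adversarial links are attached, separating it into a \emph{signal} part (the contribution of the original, mostly same-class neighbors) and a \emph{noise} part (the contribution of the newly attached adversarial nodes), and then to show that the relative weight of the noise part — the only quantity that can drive the true-class probability down — is monotone in the two degrees of interest.

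Write $d_w := |\mathcal{N}_w|$ for the self-loop-inclusive degree, let $c = Y_u$ be the target's true class, and set $w(v) := 1/\sqrt{d_v+1}$. On the clean graph the pre-softmax input at $u$ is $W z_u$ with $z_u = \sum_{v\in\mathcal N(u)} \frac{1}{\sqrt{d_u d_v}}\, x_v$; substituting $x_v = \mu(Y_v)$ and using that almost all of $u$'s neighbors lie in class $c$, the $c$-coordinate of $z_u$ dominates, $z_u \approx \frac{S_u}{\sqrt{d_u}}\, e_c$ with $S_u := \sum_{v\in\mathcal N(u)} 1/\sqrt{d_v}$, so $\mathrm{softmax}(Wz_u)_c$ is near $1$ (the prediction is correct). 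After attaching $\Delta$ adversarial nodes $v_1,\dots,v_\Delta$ carrying the common noise, each raising $d_u$ and its own degree by one, the original neighbor terms are unchanged and the new input is proportional to $\sum_{v\in\mathcal N(u)} x_v/\sqrt{d_v} + \sum_{i=1}^\Delta w(v_i)\, x_{v_i}$. Hence the fraction of the aggregate that is noise is
\[
\rho \;=\; \frac{\sum_{i=1}^\Delta w(v_i)}{\sum_{i=1}^\Delta w(v_i) + S_u},
\]
and, because $W$ is label-consistent (in the idealized one-hot model one may take $W$ diagonal and argue coordinatewise), the true-class logit relative to the others — and therefore $\mathrm{softmax}(\cdot)_c$ — is a strictly decreasing function of $\rho$; the attack succeeds once $\rho$ exceeds a threshold $\rho^\star$ depending only on $W$ and $C$.

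The two assertions are then the elementary monotonicity of $\rho$. For (2), fix $u$, $\Delta$, and all but one adversarial degree: $\rho$ is strictly increasing in $w(v_i) = 1/\sqrt{d_{v_i}+1}$, hence strictly decreasing in $d_{v_i}$, so a lower-degree adversarial node injects a larger share of noise, pushes $\mathrm{softmax}(\cdot)_c$ further down, and reaches $\rho^\star$ with a smaller budget. For (1), note $S_u$ is a sum of $d_u$ positive terms whose summands are comparable in the ``most neighbors same class'' regime, so $S_u$ grows essentially linearly in $d_u$; since $\rho$ is strictly decreasing in $S_u$, a higher-degree target yields a smaller $\rho$ for any fixed adversarial set, so proportionally more adversarial links are needed — a lower-degree target is easier to attack.

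The main obstacle is the pair of modelling steps that make ``noise'' and ``most neighbors in the same class'' precise: first, showing that on the clean graph the $c$-coordinate of $z_u$ genuinely dominates (so the minority off-class neighbors go into an error term and the pre-attack label is correct), and second, controlling $W$ and the (scale-sensitive) softmax so that the true-class probability is really monotone in the scalar $\rho$. Once these are nailed down — via the assumptions already in force (one-hot features, a label-consistent $W$, bounded neighbor degrees) or via a quantitative bound on the off-class mass — claims (1) and (2) reduce exactly to monotonicity of $\rho$ in $S_u$ and in the $w(v_i)$, together with $S_u \asymp d_u$.
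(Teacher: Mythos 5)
Your argument is, in substance, the same as the paper's: both reduce to the observation that under one-hot features and a diagonal-like $W$, the pre-softmax vector at the target splits into a ``signal'' coefficient on the true-class coordinate contributed by the (mostly same-class) original neighbors and a ``noise'' coefficient $\frac{1}{\sqrt{(|\mathcal{N}_u|+1)(|\mathcal{N}_v|+1)}}$ on the adversarial node's class, and both claims follow from monotonicity of these coefficients in the two degrees. Your claim (2) goes through exactly as in the paper: only the noise coefficient changes, and it is decreasing in the adversarial degree. Where your packaging into the single scalar $\rho = N/(N+S_u)$ falls short is claim (1): because the softmax is not scale-invariant, the class-$c$ probability is \emph{not} a function of $\rho$ alone, and the reduction silently discards the overall prefactor $1/\sqrt{|\mathcal{N}_u|+1}$, which shrinks the true-class logit as $|\mathcal{N}_u|$ grows and so works \emph{against} the conclusion. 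The paper closes exactly this point by checking the two absolute coefficients separately: the signal coefficient $|\mathcal{N}_u|/\sqrt{(|\mathcal{N}_u|+1)\langle d\rangle}$ is increasing in $|\mathcal{N}_u|$ (since $t\mapsto t/\sqrt{t+1}$ is increasing) while the noise coefficient $1/\sqrt{(|\mathcal{N}_u|+1)(|\mathcal{N}_e|+1)}$ is decreasing, so the higher-degree target's true-class logit rises and its adversarial logit falls simultaneously, which is what actually forces $h_a[Y_u]>h_u[Y_u]$ through the softmax. You flag this scale-sensitivity yourself as an open obstacle; replacing the appeal to ``monotone in $\rho$'' by this two-coefficient check (which costs one line) turns your sketch into the paper's proof. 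The remaining idealizations you list (dominance of the $c$-coordinate on the clean graph, diagonal $W^*$) are assumed rather than proved in the paper as well, so you are not missing anything the authors supply.
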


\label{proof:4.1}

\begin{proof}

As we can know, $W \in \mathbb{R}^{C \times C}$ is the learning parameter that needs to be optimized in our assumption. Since we only utilize a single layer GCN, the aggregated features from the aggregator are the weighted sum of original features (i.e., one-hot vectors corresponding to their labels). Moreover, we can easily obtain that, a well-performed GCN can be trained with the optimal $W^*$ which is similar to follows. $W^*$ would be a diagonal-like matrix where the diagonal elements are non-zero while all other elements are mostly close to zero. Only under this condition, the final output vector $h_u$ will have a large value on $u$'s corresponding label index, indicating that the corresponding GCN can classify the nodes to the ground truth label with a large probability.

For a specific node $u$ with $k$ neighbors, the original output $h_u$ is given as follows.

\begin{equation} \label{eq:gen_hu}
    \begin{aligned}
        h_u &= \sigma(W^* \cdot \sum_{v \in \mathcal{N}(u)} \frac{1}{\sqrt{|\mathcal{N}_u| \cdot |\mathcal{N}_v|}} \cdot x_v) \\
        &= \sigma[\frac{W^*}{\sqrt{|\mathcal{N}_u|}} \cdot (\frac{x_i}{\sqrt{|\mathcal{N}_i|}} + \frac{x_j}{\sqrt{|\mathcal{N}_j|}} +  \cdots  \\ & \quad + \frac{x_k}{\sqrt{|\mathcal{N}_k|}})]
    \end{aligned}
\end{equation} 
Therefore, for the first claim of Proposition \ref{pro1}, assuming we have another targeted node $a$ having neighbors $l, m, ..., n$. The label of node $a$ is the same as $u$ (i.e., $Y_a = Y_u$) and the degree of node $a$ is higher than the degree of node $u$ (i.e., $|\mathcal{N}_a| > |\mathcal{N}_u|$), which can be given as follows.

\begin{equation} \label{gen_ha}
    h_a = \sigma[\frac{W^*}{\sqrt{|\mathcal{N}_a|}} \cdot (\frac{x_l}{\sqrt{|\mathcal{N}_l|}} + \frac{x_m}{\sqrt{|\mathcal{N}_m|}} + \cdots + \frac{x_n}{\sqrt{|\mathcal{N}_n|}})]
\end{equation}

Then, after we connect the same adversarial node $e$ to the target nodes, respectively, the latest output of GCN under $W^*$ will be as follows.

\begin{equation} \label{eq:gen_hua}
    \begin{aligned}
        h_u &= \sigma[\frac{W^*}{\sqrt{|\mathcal{N}_u| + 1}} \cdot (\frac{x_i}{\sqrt{|\mathcal{N}_i|}} + \frac{x_j}{\sqrt{|\mathcal{N}_j|}} + \cdots  \\ & \quad+ \frac{x_k}{\sqrt{|\mathcal{N}_k|}} + \frac{x_e}{\sqrt{|\mathcal{N}_e|+1}})] \\ \\
        h_a &= \sigma[\frac{W^*}{\sqrt{|\mathcal{N}_a| + 1}} \cdot (\frac{x_l}{\sqrt{|\mathcal{N}_l|}} + \frac{x_m}{\sqrt{|\mathcal{N}_m|}} + \cdots  \\ & \quad+ \frac{x_n}{\sqrt{|\mathcal{N}_n|}} + \frac{x_e}{\sqrt{|\mathcal{N}_e|+1}})],
        \end{aligned}
\end{equation} 
where $|\mathcal{N}_e|$ is the original degree of node $e$.

Since we assume the label of original neighbors (i.e., $x_i, x_j, \cdots, x_k$) of node $u$ are the same as center node, we can utilize one-hot feature $\mu(Y_u)$ to simplify (\ref{eq:gen_hua}). Moreover, if we further assume the degree of all neighbors equal to the average degree of the original graph, denoted as $\!<\!d\!>\!$, we can further have follows.

\begin{equation} \label{eq:gen_hua1}
    \begin{aligned}
        h_u &= \sigma[\frac{W^*}{\sqrt{|\mathcal{N}_u| + 1}} \cdot (\underbrace{\frac{\mu(Y_u)}{\sqrt{\!<\!d\!>\!}} + \frac{\mu(Y_u)}{\sqrt{\!<\!d\!>\!}} + \cdots + \frac{\mu(Y_u)}{\sqrt{\!<\!d\!>\!}}}_{|\mathcal{N}_u|}  \\ & \quad+ \frac{\mu(Y_e)}{\sqrt{|\mathcal{N}_e|+1}})] \\
        &= \sigma[\frac{W^*}{\sqrt{|\mathcal{N}_u| + 1}} \cdot (\frac{|\mathcal{N}_u| \cdot \mu(Y_u)}{\sqrt{\!<\!d\!>\!}} + \frac{\mu(Y_e)}{\sqrt{|\mathcal{N}_e|+1}})] \\
        &= \sigma[\frac{|\mathcal{N}_u| \cdot W^* \cdot \mu(Y_u)}{\sqrt{(|\mathcal{N}_u| + 1) \cdot \!<\!d\!>\!}} + \frac{W^* \cdot \mu(Y_e)}{\sqrt{(|\mathcal{N}_u| + 1) \cdot (|\mathcal{N}_e|+1)}}] \\ \\
        h_a &= \sigma[\frac{W^*}{\sqrt{|\mathcal{N}_a| + 1}} \cdot (\underbrace{\frac{\mu(Y_a)}{\sqrt{\!<\!d\!>\!}} + \frac{\mu(Y_a)}{\sqrt{\!<\!d\!>\!}} + \cdots + \frac{\mu(Y_a)}{\sqrt{\!<\!d\!>\!}}}_{|\mathcal{N}_a|}  \\ & \quad+ \frac{\mu(Y_e)}{\sqrt{|\mathcal{N}_e|+1}})] \\
        &= \sigma[\frac{W^*}{\sqrt{|\mathcal{N}_a| + 1}} \cdot (\frac{|\mathcal{N}_a| \cdot \mu(Y_a)}{\sqrt{\!<\!d\!>\!}} + \frac{\mu(Y_e)}{\sqrt{|\mathcal{N}_e|+1}})] \\
        &= \sigma[\frac{|\mathcal{N}_a| \cdot W^* \cdot \mu(Y_a)}{\sqrt{(|\mathcal{N}_a| + 1) \cdot \!<\!d\!>\!}} + \frac{W^* \cdot \mu(Y_e)}{\sqrt{(|\mathcal{N}_a| + 1) \cdot (|\mathcal{N}_e|+1)}}]
    \end{aligned}
\end{equation}
As $|\mathcal{N}_a| > |\mathcal{N}_u|$, we can easily obtain the following from (\ref{eq:gen_hua1}).

\begin{equation} \label{eq:gen_hua2}
    \begin{aligned}
        \frac{|\mathcal{N}_u|}{\sqrt{(|\mathcal{N}_u| + 1) \cdot \!<\!d\!>\!}} &- \frac{|\mathcal{N}_a|}{\sqrt{(|\mathcal{N}_a| + 1) \cdot \!<\!d\!>\!}} < 0\\ \\
        \frac{1}{\sqrt{(|\mathcal{N}_u| + 1) \cdot (|\mathcal{N}_e| + 1)}} &- \frac{1}{\sqrt{(|\mathcal{N}_a| + 1) \cdot (|\mathcal{N}_e| + 1)}} > 0 
    \end{aligned}	
\end{equation}
Therefore, after multiplying with the diagonal-like optimal weight matrix $W^*$, and the final adjustment of the monotonic softmax activation function, the coefficient in each term represents the specific weight of the corresponding label. For the first term of nodes $u$ and $a$ in (\ref{eq:gen_hua1}), we can easily obtain that for the original features denoted by $\mu(Y_u)$ or $\mu(Y_a)$, node $a$ will assign a larger weight than node $u$. On the contrary, for the second term of nodes $u$ and $a$ in (\ref{eq:gen_hua1}), we can obtain that for the adversarial features $\mu(Y_e)$ induced by the adversarial link, node $a$ will assign a smaller weight than node $u$. Therefore, we have $h_{a}[Y_u] > h_{u}[Y_u]$. That is to say, in the $Y_u$-th/$Y_a$-th index of $h_{a}/h_{u}$, the value of node $a$ will larger than node $u$. Therefore, for the target nodes, nodes with a lower degree are easier to be influenced than those with a higher degree.

For the second claim of Proposition \ref{pro1}, we can proof following the similar procedures as before. To attack the target node $u$, assuming we have two different adversarial nodes $p$ and $q$ where $|\mathcal{N}_p| > |\mathcal{N}_q|$. Then, for the latest output of node $u$ after attacking by nodes $p$ and $q$, respectively, we can have follows.

\begin{equation} \label{eq:gen_huu}
    \begin{aligned}
        h_{u_p} &= \sigma[\frac{W^*}{\sqrt{|\mathcal{N}_u| + 1}} \cdot (\frac{x_i}{\sqrt{|\mathcal{N}_i|}} + \frac{x_j}{\sqrt{|\mathcal{N}_j|}} + \cdots + \frac{x_k}{\sqrt{|\mathcal{N}_k|}}  \\ & \quad+ \frac{x_p}{\sqrt{|\mathcal{N}_p|+1}})] \\ \\
        h_{u_q} &= \sigma[\frac{W^*}{\sqrt{|\mathcal{N}_u| + 1}} \cdot (\frac{x_i}{\sqrt{|\mathcal{N}_i|}} + \frac{x_j}{\sqrt{|\mathcal{N}_j|}} + \cdots + \frac{x_k}{\sqrt{|\mathcal{N}_k|}}  \\ & \quad+ \frac{x_q}{\sqrt{|\mathcal{N}_q|+1}})]
    \end{aligned}
\end{equation} 
Similar to (\ref{eq:gen_hua1}), we can simplify (\ref{eq:gen_huu}) under the same assumptions, which is as follows.
\begin{equation} \label{eq:gen_huu1}
    \begin{aligned}
        h_{u_p} &= \sigma[\frac{|\mathcal{N}_u| \cdot W^* \cdot \mu(Y_u)}{\sqrt{(|\mathcal{N}_u| + 1) \cdot \!<\!d\!>\!}} + \frac{W^* \cdot \mu(Y_p)}{\sqrt{(|\mathcal{N}_u| + 1) \cdot (|\mathcal{N}_p|+1)}}] \\ \\
        h_{u_q} &= \sigma[\frac{|\mathcal{N}_u| \cdot W^* \cdot \mu(Y_u)}{\sqrt{(|\mathcal{N}_u| + 1) \cdot \!<\!d\!>\!}} + \frac{W^* \cdot \mu(Y_q)}{\sqrt{(|\mathcal{N}_u| + 1) \cdot (|\mathcal{N}_q|+1)}}]
    \end{aligned}
\end{equation}
Similarly, after multiplying with the diagonal-like optimal weight matrix $W^*$ and the adjustment of the monotonic softmax function, the coefficient of each term represents the specific weight of the corresponding possible label. From (\ref{eq:gen_huu1}), we know that the first term of $h_{u_p}$ and $h_{u_q}$ is the same. While for the second term, the latter situation (i.e., connects with node $q$) will assign a larger weight to the adversarial nodes/features than the first situation (i.e., connects with node $p$) since $|\mathcal{N}_p| > |\mathcal{N}_q|$, so we have $h_{u_p}[Y_u] > h_{u_q}[Y_u]$.  That is to say, in the $Y_u$-th index of final outputs, the value of $u_p$ will larger than $u_q$. Therefore, for the adversarial nodes, nodes with a lower degree can influence the aggregation of the target node more than those with a higher degree.
\end{proof}

\subsection{Proof of Proposition 4.2}

\begin{proposition}
    Except for the specific noise value of adversarial links varying from each other, we let all of the other assumptions be the same as Proposition \ref{pro1}. Then, we have the following. For a specific target node $u$, if the adversarial nodes have the same degree, the adversarial nodes which are dissimilar to node $u$ influence the aggregation of node $u$ more than those similar to node $u$.
\end{proposition}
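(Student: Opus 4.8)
The plan is to reuse the one-layer logit decomposition already derived in the proof of Proposition~\ref{pro1}. After attaching an adversarial node $v$ to the target $u$, the pre-softmax logit vector of $u$ (under the same one-hot features, homophily, and diagonal-like optimal weight $W^{*}$ with diagonal entries $\approx w>0$) has, as in (\ref{eq:gen_hua1}), the form
\begin{equation*}
L_v \;=\; \frac{|\mathcal{N}_u|\, W^{*}\mu(Y_u)}{\sqrt{(|\mathcal{N}_u|+1)\,\langle d\rangle}} \;+\; \frac{W^{*}\mu(Y_v)}{\sqrt{(|\mathcal{N}_u|+1)\,(|\mathcal{N}_v|+1)}},\qquad h_u=\sigma(L_v).
\end{equation*}
First I would fix two candidate adversarial nodes $v_1,v_2$ of the \emph{same} degree, so the scalar $1/\sqrt{(|\mathcal{N}_u|+1)(|\mathcal{N}_{v_i}|+1)}$ multiplying their contributions is identical; then $L_{v_1}$ and $L_{v_2}$ differ only through $W^{*}\mu(Y_{v_1})$ versus $W^{*}\mu(Y_{v_2})$, and since the first (clean-graph) summand is common, the whole comparison collapses to how the adversarial summand perturbs the $Y_u$-coordinate of the logit vector relative to the competing coordinates.

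Next I would pin down the meaning of ``dissimilar'' through the noise functional $\mathrm{DIS}(u,v)=-\sum_i (h_u^{(k)})_i\log(h_v^{(k)})_i$ of (\ref{eq:dis}). Under the standing homophily assumption $h_u^{(k)}$ concentrates on $Y_u$, so to leading order $\mathrm{DIS}(u,v)\approx -\log (h_v^{(k)})_{Y_u}$: the noise of the link $(u,v)$ is a strictly decreasing function of the mass that $v$'s representation places on $u$'s own class. In the one-hot idealization this is the clean dichotomy ``$Y_v=Y_u$ (zero extra noise, similar)'' versus ``$Y_v\neq Y_u$ (maximal noise, dissimilar)'': a similar node contributes $W^{*}\mu(Y_u)\approx w\,e_{Y_u}$, which boosts only the $Y_u$-logit, whereas a dissimilar node contributes $W^{*}\mu(Y_v)\approx w\,e_{Y_v}$, which leaves the $Y_u$-logit essentially untouched and instead raises the competing $Y_v$-logit. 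More generally, a larger $\mathrm{DIS}(u,v)$ forces $(W^{*}h_v)_{Y_u}$ down and shifts weight onto coordinates $\neq Y_u$, again because $W^{*}$ is diagonal-like.

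I would then close with the monotonicity of the softmax: $\sigma(L)_{Y_u}$ is strictly increasing in the $Y_u$-logit and strictly decreasing in every other logit. A more dissimilar (higher-noise) adversarial node of a fixed degree simultaneously adds less to the $Y_u$-logit and more to some competing logit than a more similar node, and both effects drive $h_u[Y_u]$ downward, i.e.\ make $\mathcal{L}_{\rm atk}(u)$ of (\ref{eq:loss_attack1}) smaller; this is exactly the assertion that dissimilar adversarial nodes influence the aggregation of $u$ more than similar ones. The hard part will be the middle step: $\mathrm{DIS}(u,v)$ is a single scalar summarizing the whole distribution $h_v^{(k)}$, and a priori two representations with equal $\mathrm{DIS}$ could move $h_u[Y_u]$ by different amounts. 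The argument only closes because (i) the diagonal-like structure of $W^{*}$ makes the $Y_u$-logit depend on $h_v$ solely through $(h_v)_{Y_u}$, and (ii) the homophily assumption $h_u\approx e_{Y_u}$ makes $\mathrm{DIS}(u,v)$ monotone in precisely that coordinate. I would therefore state these two reductions explicitly, prove the claim rigorously in the one-hot dichotomy, and note that the fully graded version rests on the extra structure above rather than on an arbitrary scalar notion of noise.
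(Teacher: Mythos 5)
Your proof follows essentially the same route as the paper's: the same two-term logit decomposition from the proof of Proposition~\ref{pro1}, the same-degree assumption cancelling the common scalar weight $1/\sqrt{(|\mathcal{N}_u|+1)(|\mathcal{N}_v|+1)}$, the diagonal-like $W^*$ isolating the $Y_u$-coordinate, and softmax monotonicity yielding $h_{u_p}[Y_u] > h_{u_q}[Y_u]$. The only difference is that you make ``dissimilar'' precise via the ${\rm DIS}$ functional of (\ref{eq:dis}) and flag the scalar-summary caveat explicitly, whereas the paper argues informally that a ``large part'' of the similar node's contribution aligns with $\mu(Y_u)$; your version is, if anything, slightly more careful.
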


\label{proof:4.2}

\begin{proof}
		Assuming we have two different adversarial nodes $p$, $q$, and $|\mathcal{N}_p| = |\mathcal{N}_q|$, but their similarities with the target node $u$ are different. Specifically, ${\rm SIM}(u,p) > {\rm SIM}(u,q)$ where ${\rm SIM}(\cdot, \cdot)$ means the similarity function. 
		\begin{equation} \label{eq:gen_huv}
			\begin{aligned}
				h_{u_p} &= \sigma[\frac{W^*}{\sqrt{|\mathcal{N}_u| + 1}} \cdot (\frac{x_i}{\sqrt{|\mathcal{N}_i|}} + \frac{x_j}{\sqrt{|\mathcal{N}_j|}} + \cdots + \frac{x_k}{\sqrt{|\mathcal{N}_k|}}  \\ & \quad+ \frac{x_p}{\sqrt{|\mathcal{N}_p|+1}})] \\
				&= \sigma[\frac{|\mathcal{N}_u| \cdot W^* \cdot \mu(Y_u)}{\sqrt{(|\mathcal{N}_u| + 1) \cdot \!<\!d\!>\!}} + \frac{W^* \cdot \mu(Y_p)}{\sqrt{(|\mathcal{N}_u| + 1) \cdot (|\mathcal{N}_p|+1)}}] \\ \\
				h_{u_q} &= \sigma[\frac{W^*}{\sqrt{|\mathcal{N}_u| + 1}} \cdot (\frac{x_i}{\sqrt{|\mathcal{N}_i|}} + \frac{x_j}{\sqrt{|\mathcal{N}_j|}} + \cdots + \frac{x_k}{\sqrt{|\mathcal{N}_k|}}  \\ & \quad+ \frac{x_q}{\sqrt{|\mathcal{N}_q|+1}})]\\
				&= \sigma[\frac{|\mathcal{N}_u| \cdot W^* \cdot \mu(Y_u)}{\sqrt{(|\mathcal{N}_u| + 1) \cdot \!<\!d\!>\!}} + \frac{W^* \cdot \mu(Y_q)}{\sqrt{(|\mathcal{N}_u| + 1) \cdot (|\mathcal{N}_q|+1)}}]
			\end{aligned}
		\end{equation} 
		Since ${\rm SIM}(u,p) > {\rm SIM}(u,q)$, if we further utilize their features to characterize the similarities, we can transform it to  ${\rm SIM}(\mu(Y_u),\mu(Y_p)) > {\rm SIM}(\mu(Y_u),\mu(Y_q))$. As a result, in (\ref{eq:gen_huv}), we can consider that, a large part of the second term of $h_{u_p}$ can be combined with the first term, while only a smaller part of the second term $h_{u_q}$ can be combined with the first term. Similarly, after multiplying with the diagonal-like optimal weight matrix $W^*$ and the adjustment of monotonic softmax function, in the $Y_u$-th index of the latest outputs, the value of $u_p$ will larger than $u_q$, that is $h_{u_p}[Y_u] > h_{u_q}[Y_u]$. From the above, we can obtain that the adversarial nodes which are dissimilar with node $u$ will influence the aggregation of node $u$ more than those that are similar with node $u$, as the latter has a smaller probability in the $Y_u$-th class than the former.
		
	\end{proof}

\end{document}